\def\eqref#1{(\ref{#1})}
\def\1{\bm{1}}
\DeclareMathAlphabet{\mathsfit}{\encodingdefault}{\sfdefault}{m}{sl}
\SetMathAlphabet{\mathsfit}{bold}{\encodingdefault}{\sfdefault}{bx}{n}
\pgfplotsset{compat=1.18}
\newtheorem*{rep@theorem}{\rep@title}
\newcommand{\newreptheorem}[2]{%
\newenvironment{rep#1}[1]{%
 \def\rep@title{#2 \ref{##1}}%
 \begin{rep@theorem}}%
 {\end{rep@theorem}}}
\theoremstyle{plain}
\newtheorem{theorem}{Theorem}[section]
\newtheorem{proposition}[theorem]{Proposition}
\newtheorem{lemma}[theorem]{Lemma}
\newtheorem{corollary}[theorem]{Corollary}
\theoremstyle{definition}
\theoremstyle{remark}
\newtheorem{remark}[theorem]{Remark}
\newcommand{\nn}{\nonumber}
\newcommand{\purple}{\color{black}}
\newcommand{\blue}{\color{black}}
\newcommand{\KLr}{\mathrm{KL}}
\title{Semi-supervised Batch Learning From Logged Data}
\author{
  Gholamali Aminian$^{ *,\,1}$
 \and  Armin Behnamnia$^{*,\,2}$
 \and
 Roberto Vega$^{3}$
 \and
 Laura Toni$^{4}$
 \and 
 Chengchun Shi$^{5}$
 \and 
 Hamid R. Rabiee$^{2}$
 \and 
 Omar Rivasplata$^{6}$
 \and
 Miguel R. D. Rodrigues$^{4}$
}
\begin{document}

\maketitle
\renewcommand\thefootnote{$*$}\footnotetext{Equal contribution.}
\renewcommand\thefootnote{$^1$}\footnotetext{The Alan Turing Institute.}
\renewcommand\thefootnote{$^2$}\footnotetext{Department of Computer Engineering, Sharif University of Technology}

\renewcommand\thefootnote{$^3$}\footnotetext{Exo Imaging Company}
\renewcommand\thefootnote{$^4$}\footnotetext{Department of Electronic and Electrical Engineering, University College London }
\renewcommand\thefootnote{$^5$}\footnotetext{Department of Statistics, London School of Economics and Political Science }
\renewcommand\thefootnote{$^6$}\footnotetext{Department of Statistics, University College London}


\begin{abstract}
Off-policy learning methods are intended to learn a policy from logged data, which includes context, action, and feedback (cost or reward) for each sample point. In this work we build on the counterfactual risk minimization framework, which also assumes access to propensity scores. We propose learning methods for problems where feedback is missing for some samples, so there are samples with feedback and samples missing-feedback in the logged data. We refer to this type of learning as semi-supervised batch learning from logged data, which arises in a wide range of application domains. We derive a novel upper bound for the true risk under the inverse propensity score estimator to address this kind of learning problem. Using this bound, we propose a regularized semi-supervised batch learning method with logged data where the regularization term is feedback-independent and, as a result, can be evaluated using the logged missing-feedback data. Consequently, even though feedback is only present for some samples, a learning policy can be learned by leveraging the missing-feedback samples. The results of experiments derived from benchmark datasets indicate that these algorithms achieve policies with better performance in comparison with logging policies.
\end{abstract}

\section{Introduction}\label{sec:introduction}
Off-policy learning from logged data is an important problem in reinforcement learning theory and practice. 
The logged `known-feedback' dataset represents interaction logs of a system with its environment; recording context, action, propensity score (i.e., probability of the action selection for a given context under the logging policy), and feedback. 
The literature has considered this setting concerning contextual bandits and partially labeled observations. It is used in many real applications, e.g., recommendation systems~\citep{aggarwal2016recommender, li2011unbiased}, personalized medical treatments~\citep{kosorok2019precision,bertsimas2017personalized} and personalized advertising campaigns~\citep{tang2013automatic,bottou2013counterfactual}.
However, there are two main obstacles to learning from this kind of logged data: first, the observed feedback is available for the chosen action only; and second, the logged data is taken under the logging policy so that it could be biased. Batch learning with logged bandit feedback, also known as Counterfactual Risk Minimization (CRM), is a strategy for off-policy learning from logged `known-feedback' datasets, which has been proposed by \citet{swaminathan2015batch} to tackle these challenges.

 Batch learning with logged bandit feedback has led to promising results in some settings, including advertising and recommendation systems. 
However, there are some scenarios where the logged dataset is generated in an uncontrolled manner, posing significant obstacles such as unobserved feedback for some chosen context and action pairs. For example, consider an advertising system server where some ads (actions) are shown to different clients (contexts) according to a conditional probability (propensity score). Now, suppose that the connections between the clients and the server are corrupted momentarily such that the server does not receive any feedback, i.e., whether or not the user has clicked on some ads. Under this scenario, we have access to `missing-feedback' data indicating the chosen clients, the shown ads, the probability of shown ads but missing feedback, and some logged data containing feedback. 
Likewise, there are other scenarios where obtaining feedback samples for some context and action (and propensity score) samples may be challenging since it might be expensive or unethical, such as in finance \citep{musto2015personalized} or healthcare \citep{chakrabortty2018efficient}. 


We call Semi-supervised Batch Learning (S2BL) our approach to learning in these scenarios, where we have access to the logged missing-feedback dataset, besides the logged known-feedback dataset, which was the typical data considered in previous approaches. 

This paper proposes algorithms that leverage the logged missing-feedback and known-feedback datasets in an off-policy optimization problem. The contributions of our work are as follows:
\begin{itemize}
[leftmargin=15pt,topsep=1.5pt,itemsep=1.5pt]

    \item We propose a novel upper bound on the true risk of a policy, in terms of the truncated inverse propensity score (IPS) estimator and divergences (KL and reverse KL) between the logging policy and a learning policy.

    \item Inspired by this upper bound, we propose regularization approaches based on KL divergence or reverse KL divergence between the logging policy and a learning policy, which are independent of feedback and hence can be optimized using the logged missing-feedback dataset. We also propose consistent and asymptotically unbiased estimators of KL divergence and reverse KL divergence between the logging policy and a learning policy.

    \item We report on experiments conducted on various datasets to assess the effectiveness of our proposed algorithms. The results demonstrate our method's ability to leverage logged missing-feedback data across different setups, encompassing both linear and deep structures. Furthermore, we offer a comparative analysis against established baselines in the literature.
\end{itemize}

\section{Related Works}

Various methods have been developed to learn from logged known-feedback datasets. The main approach is batch learning with a logged known-feedback dataset (bandit feedback), discussed next. Appendix (App.)~\ref{App: related works} discusses other related topics and the corresponding literature. 

\textbf{Batch Learning with Logged known-feedback dataset:} The mainstream approach for off-policy learning from a logged known-feedback dataset is CRM~ \citep{swaminathan2015batch}. In particular, \citet{joachims2018deep} proposed a new approach to train a neural network, where the output of the softmax layer is considered as the policy, and the network is trained using the available logged known-feedback dataset. 
Our work builds on the former, albeit proposing methods to learn from logged missing-feedback data besides the logged known-feedback dataset.
CRM has also been combined with domain adversarial networks by \citet{atan2018counterfactual}.  \citet{wu2018variance} proposed a new framework for CRM based on regularization by Chi-square divergence between the learning policy and the logging policy, and a generative-adversarial approach is proposed to minimize the regularized empirical risk using the logged known-feedback dataset. \citet{xie2018off} introduced the surrogate policy method in CRM. The combination of causal inference and counterfactual learning was studied by \citet{bottou2013counterfactual}. Distributional robust optimization is applied in CRM by \citet{faury2020distributionally}. A lower bound on the expected reward in CRM under Self-normalized Importance Weighting was derived by \citet{kuzborskij2021confident}. The sequential CRM where the logged known-feedback dataset is collected at each iteration of training is studied by \citet{zenati2023sequential}. In this work, we introduce a novel algorithm that leverages both the logged missing-feedback dataset and the logged known-feedback dataset.

{\blue \textbf{Pessimism Method and Off-policy Reinforcement Learning:} The pessimism concept originally, introduced in offline reinforcement learning \citep{buckman2020importance,jin2021pessimism}, aims to derive an optimal policy within Markov decision processes (MDPs) by utilizing pre-existing datasets \citep{rashidinejad2022optimal,rashidinejad2021bridging,yin2021towards,yan2023efficacy}. This concept has also been adapted to contextual bandits, viewed as a specific MDP instance. Recently, a ‘design-based’ version of the pessimism principle is proposed by \citet{jin2022policy} who propose a data-dependent and policy-dependent regularization inspired by a lower confidence bound (LCB) on the estimation uncertainty of the augmented-inverse-propensity-weighted (AIPW)-type estimators which also includes IPS estimators. Our work differs from that of \citet{jin2022policy} as our regularization is inspired by variance reduction of truncated IPS estimator. However, the regularization used by \citet{jin2022policy} is motivated by a LCB. In addition, our regularization, can be implemented by deep neural networks. }

\textbf{Importance Weighting:} This method has been proposed for off-policy estimation and learning \citep{thomas2015high,swaminathan2015batch}. 
Due to its large variance in many cases \citep{rosenbaum1983central}, some truncated importance sampling methods are proposed, including the IPS estimator with a truncated ratio of policy and logging policy \citep{ionides2008truncated}, IPS estimator with truncated propensity score \citep{strehl2010learning} or self-normalizing estimator \citep{swaminathan2015self}.  
A balance-based weighting approach for policy learning, which outperforms other estimators, was proposed by~\citet{kallus2018balanced}. 
A generalization of importance sampling by considering samples from different policies is studied by \citet{papini2019optimistic}. The weights can be estimated directly by sampling from contexts and actions using Direct Importance Estimation \citep{sugiyama2007direct}. A convex surrogate for the regularized true risk by the entropy of learning policy is proposed by \citet{chen2019surrogate}. An exponential smoothed version of the IPS estimator is proposed by \citet{aouali2023exponential}. Other corrections of IPS estimator are also proposed by \citet{metelli2021subgaussian,su2020doubly}. IX-estimator~\citep{neu2015explore} where a constant is added to logging policy is studied by \citet{gabbianelli2023importance}.
This work considers the IPS estimator based on a truncated propensity score.

\section{Preliminaries}\label{Sec: Preliminaries}
\textbf{Notations:} We adopt the following convention for random variables and their distributions in the sequel. 
A random variable is denoted by an upper-case letter (e.g., $Z$), an arbitrary value of this variable is denoted with the lower-case letter (e.g., $z$), and its space of all possible values with the corresponding calligraphic letter (e.g., $\mathcal{Z}$). 
This way, we can describe generic events like $\{ Z = z \}$ for any $z \in \mathcal{Z}$, or events like $\{ g(Z) \leq 5 \}$ for functions $g : \mathcal{Z} \to \mathbb{R}$. 
The probability distribution of the random variable $Z$ is denoted $ P_Z$. 
The joint distribution of a pair of random variables $(Z_1,Z_2)$ is denoted by $P_{Z_1,Z_2}$. We denote the set of integer numbers from 1 to $n$ by $[n]\triangleq \{1,\cdots,n\}$.

\textbf{Divergence Measures:} If  $P$ and $Q$ are probability measures over $\mathcal{Z}$, 
the Kullback-Leibler (KL) divergence $\KLr(P\|Q)$ is given by
$\KLr(P\|Q)\triangleq\int_{\mathcal{Z}}\log\bigl(\frac{dP}{dQ}\bigr) dP$ when $P$ is absolutely continuous\footnote{$P$ is absolutely continuous with respect to $Q$ if $P(A) = 0$ whenever $Q(A) = 0$, for measurable $A \subset \mathcal{Z}$.}  with respect to $Q$, and $\KLr(P\|Q)\triangleq\infty$ otherwise. 

The so-called `reverse KL divergence' is $\KLr(Q\|P)$,
with arguments in the reverse order. 
The chi-square divergence is  $\chi^2(P\|Q)\triangleq\int_\mathcal{Z} (\frac{dP}{dQ})^2dQ-1$.

For a pair of random variables $(T,Z)$, 
the conditional KL divergence $\KLr(P_{T|Z} \| Q_{T|Z})$ is defined as 
$$
\KLr(P_{T|Z} \| Q_{T|Z})\triangleq\int_\mathcal{Z}\KLr(P_{T|Z=z} \| Q_{T|Z=z}) dP_{Z}(z).
$$
The conditional chi-square divergence $\chi^2(P_{T|Z} \| Q_{T|Z})$ is defined similarly. 

\textbf{Problem Formulation:} Let $\mathcal{X}$ be the set of contexts and $\mathcal{A}$ the finite set of actions, with $\vert\mathcal{A}\vert=k\ge 2$. We consider policies as conditional distributions over actions, given contexts. For each pair of context and action $(x,a)\in \mathcal{X}\times \mathcal{A}$ and policy $\pi \in \Pi$, where $\Pi$ is the set of policies, the value $\pi(a|x)$ is defined as the conditional probability of choosing action $a$ given context $x$ under the policy $\pi$. 

Inspired by \citet{swaminathan2015batch}, a cost\footnote{The cost can be viewed as the opposite (negative) of the reward. Consequently, a low cost (equivalent to maximum reward) signifies user (context) satisfaction with the given action, and conversely.} function $c:\mathcal{X}\times \mathcal{A}\rightarrow [-1,0]$, which is unknown, defines the cost of each observed pair of context and action. However, in a \emph{logged known-feedback} setting, we only observe the feedback for the chosen action $a$ in a given context $x$, under the logging policy $\pi_0(a|x)$. We have access to the logged known-feedback dataset $S=(x_i,a_i,p_i,c_i)_{i=1}^n$ where each `data point' $(x_i,a_i,p_i,c_i)$ contains the context $x_i$ which is sampled from unknown distribution $P_X$, the action $a_i$ which is sampled from the logging policy $\pi_0(\cdot|x_i)$, the propensity score $p_i\triangleq\pi_0(a_i|x_i)$, and the observed cost $c_i \triangleq r(x_i,a_i)$ under logging policy $\pi_0(a_i|x_i)$.

The \emph{true risk} of a policy $\pi_\theta$ is,
\begin{align}
    \label{Eq: True Risk}
    R(\pi_\theta)&=\mathbb{E}_{P_{X}}[\mathbb{E}_{\pi_{\theta}(A|X)}[c(A,X)]].
\end{align}
Our objective is to find an optimal $\pi_\theta^\star$ which minimizes $ R(\pi_\theta)$, i.e.,
$
\pi_\theta^\star=\operatorname*{arg\,min}_{\pi_\theta\in \Pi_\theta}R(\pi_\theta),
$
where $\Pi_\theta$ is the set of all policies parameterized by $\theta\in\Theta$.
We denote the importance weighted cost function as $w(A,X)c(A,X)$, where \[w(A,X)=\frac{\pi_\theta(A|X)}{\pi_0(A|X)}.\]

As discussed by \citet{swaminathan2015self}, see also \citet{rosenbaum1983central}; we can apply the IPS estimator over logged known-feedback dataset $S$ to get an unbiased estimator of the risk (an \emph{empirical risk})  by considering the importance weighted cost function as,
\begin{align}
    \label{Eq: emp risk}
    \hat{R}(\pi_\theta,S)&=\frac{1}{n}\sum_{i=1}^n c_i w(a_i,x_i),
\end{align}
where $w(a_i,x_i)=\frac{\pi_\theta(a_i|x_i)}{\pi_0(a_i|x_i)}$. 
 The IPS estimator as an unbiased estimator has bounded variance if the $\pi_\theta(A|X)$ is absolutely continuous  with respect to $\pi_0(A|X)$, cf. \citet{strehl2010learning,langford2008exploration}. 
 %
 For the issue of the large variance of the IPS estimator, many estimators are proposed ~\citep{strehl2010learning,ionides2008truncated,swaminathan2015self}, e.g., truncated IPS estimator. In this work we consider truncated IPS estimator with threshold $\nu\in(0,1]$ as follows:
\begin{align}
    \label{Eq: truncated emp risk}
    \hat{R}_\nu(\pi_\theta,S)&=\frac{1}{n}\sum_{i=1}^n c_i w_\nu(a_i,x_i),
\end{align}
 where $w_\nu(a_i,x_i)=\frac{\pi_\theta(a_i,x_i)}{\max(\nu,\pi_0(a_i,x_i))}$. Note that the truncation threshold $\nu\in(0,1]$ implies an upper bound on the importance weights, $\sup_{(x,a)\in \mathcal{X}\times \mathcal{A}} w_\nu(a,x) \leq \nu^{-1}$.

{\purple In our S2BL setting, besides the logged known-feedback dataset $S$ we also have access to a missing-feedback dataset $S_{u}=(x_j,a_j,p_j)_{j=1}^m$. Both are assumed to be generated by the same logging policy, so $p_j=\pi_0(a_j|x_j)$ for both sets.} 

We will next develop new theory (Sections~\ref{Sec: var bounds} and \ref{Sec: Semi-supervised CRM Algorithms}) and propose two novel algorithms (Section~\ref{sec:experiments}) to learn a policy that minimizes the true risk using logged missing-feedback and known-feedback datasets.

\section{Bounds on True Risk of IPS Estimator}\label{Sec: var bounds}
In this section we provide an upper bound on the variance of importance weighted cost, i.e., 
\begin{equation}
    \begin{split}
    & \operatorname{Var}\left(w(A,X)c(A,X)\right)
    \\&\triangleq \mathbb{E}_{P_X\otimes\pi_{0}(A|X)}\left[\left(w(A,X)c(A,X)\right)^2\right]- R(\pi_\theta)^2,
 \end{split}
\end{equation}
where $R(\pi_\theta)=\mathbb{E}_{P_X\otimes\pi_{0}(A|X)}\left[w(A,X)c(A,X)\right]=\mathbb{E}_{P_X\otimes\pi_{\theta}(A|X)}\left[c(A,X)\right]$. 

Throughout this section we use the simplified notations $\KLr(\pi_\theta\|\pi_0)=\KLr(\pi_\theta(A|X)\|\pi_0(A|X))$ and
$\KLr(\pi_0\|\pi_\theta)=\KLr(\pi_0(A|X)\|\pi_\theta(A|X))$.
All the proofs are deferred to the App.\ref{App: Proofs var}.
\begin{proposition}\label{Prop: bound KL Var} Suppose that the importance weighted of squared cost function, i.e., $w(A,X)c^2(A,X)$, is $\sigma$-sub-Gaussian\footnote{A random variable $X$ is $\sigma$-subgaussian if $E[e^{\gamma(X-E[X])}]\leq e^{\frac{\gamma^2 \sigma^2}{2}}$ for all $\gamma \in \mathbb{R}$.}  under $P_X\otimes \pi_0(A|X)$ and $P_X\otimes \pi_\theta(A|X)$, and the cost function has bounded range $[b_1,b_2]$ with $b_2\geq 0$. Then, the following upper bound holds on the variance of the importance weighted cost function:
\begin{equation}
    \begin{split}
    & \operatorname{Var}\left(w(A,X)c(A,X)\right)\\&\leq 
    \sqrt{2\sigma^2 \min(\KLr(\pi_\theta\|\pi_0),\KLr(\pi_0\|\pi_\theta)) }+b_u^2-b_l^2,
 \end{split}
\end{equation}
where $b_l=\max(b_1,0)$ and $b_u=\max(|b_1|,b_2)$. 
\end{proposition}
We explore the connection between sub-Gaussian assumption and uniform coverage assumption \cite{wang2023oracle,gabbianelli2023importance} in App.\ref{App: Proofs var}. We have the following Corollary for the truncated IPS estimator with threshold $\nu\in (0,1] $.

\begin{corollary}\label{cor: bound KL Var}
Assume a bounded cost function with range $[b_1,0]$ and a truncated IPS estimator with threshold $\nu\in(0,1]$. Then the following upper bound holds on the variance of the truncated importance weighted cost function,
\begin{equation}
    \begin{split}
    &\operatorname{Var}\left(w_\nu(A,X)c(A,X)\right) 
    \\&\leq b_1^2(\nu^{-1}\sqrt{ \min(\KLr(\pi_\theta\|\pi_0),\KLr(\pi_0\|\pi_\theta))/2 }+1).
        \end{split}
\end{equation}
\end{corollary}
Using \citet[Lemma~1]{cortes2010learning}, we can provide an upper bound on the variance of importance weights in terms of the chi-square divergence by considering $c(a,x)\in[b_1,b_2]$, as follows:
\begin{equation}\label{Eq: Chi-square div}
    \begin{split}    
    &\operatorname{Var}\left(w(A,X)c(A,X)\right)
    \\&\leq b_u^2 \chi^2(\pi_\theta(A|X)\|\pi_0(A|X))+b_u^2-b_l^2,
\end{split}
\end{equation}
where $b_l=\max(b_1,0)$ and $b_u=\max(|b_1|,b_2)$. In App.\ref{App: comp kl vs chi}, we discuss that, under some conditions, the upper bound in Proposition~\ref{Prop: bound KL Var} is tighter than the upper bound based on chi-square divergence in~\eqref{Eq: Chi-square div}. 
The upper bound in Proposition~\ref{Prop: bound KL Var} shows that we can reduce the variance of importance weighted cost function, i.e., $w(A, X)c(A,X)$, by minimizing the KL divergence or reverse KL divergence, i.e. $\KLr(\pi_\theta\|\pi_0)$ or $\KLr(\pi_0\|\pi_\theta)$. A lower bound on the variance of the importance weighted cost function in terms of the KL divergence $\KLr(\pi_\theta\|\pi_0)$ is provided in App.\ref{App: Proofs var}.

We can derive a high-probability bound on the true risk under the truncated IPS estimator using the upper bound on the variance of importance weighted cost function in Corollary~\ref{cor: bound KL Var}.
\begin{theorem}\label{Theorem: main result}
Suppose the cost function takes values in $[-1,0]$. Then, for any $\delta\in(0,1)$, the following bound on the true risk of policy $\pi_\theta(A|X)$ with the truncated IPS estimator (with parameter $\nu\in(0,1]$) holds with probability at least $1-\delta$ under the distribution $P_X \otimes \pi_0(A|X)$:
\begin{equation}
 \begin{split}
        R(\pi_\theta)&\leq  \hat{R}_\nu(\pi_\theta,S)+ \frac{2 \log(\frac{1}{\delta})}{3\nu n}\\&\quad+\sqrt{\frac{ (\nu^{-1}\sqrt{ 2M }+2)   \log(\frac{1}{\delta})}{n}},
 \end{split}
\end{equation}
where $M=\min\{ \KLr(\pi_\theta\|\pi_0),\KLr(\pi_0\|\pi_\theta) \}$.
\end{theorem}
The proof of Theorem~\ref{Theorem: main result} leverages the Bernstein inequality together with an upper bound on the variance of importance weighted cost function using Proposition~\ref{Prop: bound KL Var}. Theorem~\ref{Theorem: main result} shows that we can minimize the KL divergence  $\KLr(\pi_\theta(A|X)\|\pi_0(A|X))$, or reverse KL divergence $\KLr(\pi_0(A|X)\|\pi_\theta(A|X))$, instead of the empirical variance minimization in CRM framework \citep{swaminathan2015batch} which is inspired by the upper bound given by~\citet{maurer2009empirical}. We compared our upper bound with that of \citet[Theorem~1]{london_sandler2019} in App.\ref{app: compare theory bcrm}.

Note that as we assumed the truncated IPS estimator, we do not need the overlap assumption \footnote{{\blue Given $\pi_{\theta}(A|X)$ and $\pi_0(A|X)$, then the overlap assumption between learning policy and logging policy holds if there exists $B>0$ such that $\sup_{\theta\in\Theta}\sup_{(a,x)\in\mathcal{X}\times \mathcal{A}}\frac{\pi_{\theta}(A=a|X=x)}{\pi_0(A=a|X=x)}\leq B$.}} \citep{mandal2023performative} as in off-policy reinforcement learning. 

The minimization of KL divergence and reverse KL divergence can also be interpreted from another perspective.
\begin{proposition}\label{Prop: bound KL}
The following upper bound holds on the absolute difference between risks of logging policy $\pi_0(a|x)$ and the policy $\pi_\theta(a|x)$:
\begin{equation*}
     |R(\pi_\theta)-R(\pi_0)|\leq \min\left(\sqrt{\frac{\KLr(\pi_\theta\|\pi_0)}{2}},\sqrt{\frac{\KLr(\pi_0\|\pi_\theta)}{2}}\right).
\end{equation*}
\end{proposition}
Based on Proposition~\ref{Prop: bound KL}, minimizing KL divergence and reverse KL divergence would lead to a policy close to the logging policy in KL divergence or reverse KL divergence. This phenomenon, which is also observed in the works by \citet{swaminathan2015batch,wu2018variance,london_sandler2019}, is aligned with the fact that the learned policy should not diverge too much from the logging policy~\citep{schulman2015trust}. As mentioned by \citet{brandfonbrener2021offline} and \citet{swaminathan2015self}, the propensity overfitting issues are solved by variance reduction. Therefore, with the KL divergence and reverse KL divergence regularization, we can reduce the propensity overfitting.

\section{Semi-supervised Batch Learning via Feedback Free Regularization}\label{Sec: Semi-supervised CRM Algorithms}
We now propose our approach for S2BL settings: feedback-free regularization. It can leverage the availability of the logged known-feedback dataset $S$ and the logged missing-feedback dataset $S_{u}$. The feedback-free regularized semi-supervised batch learning is based on optimizing a regularized batch learning objective via logged data, where the regularization function is independent of the feedback. It is inspired by an entropy minimization approach in semi-supervised learning, where one optimizes a label-free entropy function.

Note that the KL divergence $\KLr(\pi_\theta\|\pi_0)$ and reverse KL divergence $\KLr(\pi_0\|\pi_\theta)$  appearing in Theorem~\ref{Theorem: main result} are independent of the cost function values (feedback). This motivates us to consider them as functions that can be optimized using both the logged known-feedback and missing-feedback datasets. It is worth mentioning that the regularization based on empirical variance proposed by \citet{swaminathan2015batch} depends on feedback. 

We propose the following truncated IPS estimator regularized by KL divergence $\KLr(\pi_\theta\|\pi_0)$ or reverse KL divergence $\KLr(\pi_0\|\pi_\theta)$, thus casting S2BL into a semi-supervised CRM problem for $\lambda>0$,
\begin{align*}
    &\hat{R}_{\mathrm{KL}}(\pi_\theta,S,S_{u})\triangleq    \hat{R}_\nu(\pi_\theta,S)+ \lambda \KLr(\pi_\theta(A|X)\|\pi_0(A|X)),
    \\
    &\hat{R}_{\mathrm{RKL}}(\pi_\theta,S,S_{u}) \triangleq  \hat{R}_\nu(\pi_\theta,S)+ \lambda \KLr(\pi_0(A|X)\|\pi_\theta(A|X)), 
\end{align*}
where for $\lambda=0$, our problem reduces to traditional batch learning with the logged known-feedback dataset that neglects the logged missing-feedback dataset. Note that, various works have suggested the use of KL regularization, and we conducted a comparative analysis between our work and these studies in App.\ref{App: comp KL}. 

We provide a regret upper bound \footnote{The regret is defined as $|R(\pi^\star_\theta)- R(\pi^r_{\theta})|$, where the solution to our KL-regularized risk minimization is denoted by $\pi^r_{\theta}$.} of our algorithms (KL-regularized risk minimization) in App.\ref{app: regret upper bound}. In addition, we study the optimal policy under KL regularization in App.\ref{App: true risk minimizer}. 

For the estimation of $\KLr(\pi_\theta(A|X)\|\pi_0(A|X))$ and $\KLr(\pi_0(A|X)\|\pi_\theta(A|X))$, we can apply the logged missing-feedback dataset as follows:
\begin{align}
   &\hat{L}_{\mathrm{KL}}(\pi_\theta)\triangleq\sum_{i=1}^k \frac{1}{m_{a_i}}\sum_{\substack{(x,a_i,p)\in\\ S_{u}\cup S}} A_{\mathrm{KL}}(x,a_i,p)
   \label{Eq: CRM regularize by KL}
    \\
    &\hat{L}_{\mathrm{RKL}}(\pi_\theta)\triangleq\sum_{i=1}^k \frac{1}{m_{a_i}}\sum_{\substack{(x,a_i,p)\in\\ S_{u}\cup S}} A_{\mathrm{RKL}}(x,a_i,p)
    \label{Eq: CRM regularize estimator}
\end{align}
where $A_{\mathrm{KL}}(x,a_i,p)= \pi_\theta(a_i|x)\log(\pi_\theta(a_i|x))-\pi_\theta(a_i|x)\log(p)$, $A_{\mathrm{RKL}}(x,a_i,p)=-p\log(\pi_\theta(a_i|x))+ p\log(p)$ and $m_{a_i}$ is the number of context, action, and propensity score tuples, i.e., $(x,a,p)\in S_{u}\cup S$, with the same action, e.g., $a=a_i$ (note we have $\sum_{i=1}^k m_{a_i} =m+n$). 
It is possible to show that these estimators of KL divergence and reverse KL divergence are unbiased in the asymptotic sense.
\begin{proposition}\label{Prop: estimators}(proved in App.\ref{App: Proofs semi}) Suppose that $\KLr(\pi_\theta(A|X)\|\pi_0(A|X))$ and the reverse \newline$\KLr(\pi_0(A|X)\|\pi_\theta(A|X))$ are bounded.
Assuming $m_{a_i}\rightarrow \infty$ $(\forall a_i\in \mathcal{A})$, then
 $\hat{L}_{\mathrm{KL}}(\pi_\theta)$ and $\hat{L}_{\mathrm{RKL}}(\pi_\theta)$ are unbiased estimations of $\KLr(\pi_\theta(A|X)\|\pi_0(A|X))$ and $\KLr(\pi_0(A|X)\|\pi_\theta(A|X))$, respectively.
\end{proposition}
An estimation error analysis for the proposed estimators in Proposition~\ref{Prop: estimators} is conducted in App.\ref{App: Proofs semi}.
Note that another approach to minimize the KL divergence or reverse KL divergence is $f$-GAN \citep{wu2018variance,nowozin2016}, which is based on using a logged known-feedback dataset without considering feedback and propensity scores. It is worthwhile to mention that the generative-adversarial approach will not consider propensity scores in the logged known-feedback dataset and also incur more complexity, including Gumbel softmax sampling~\citep{jang2016categorical} and discriminator network optimization. We proposed a new estimator of these information measures considering our access to propensity scores in the logged missing-feedback dataset.
Since the term $p\log(p)$ in \eqref{Eq: CRM regularize estimator} is independent of policy $\pi_\theta$, we ignore it and optimize the following quantity instead of $\hat{L}_{\mathrm{RKL}}(\pi_\theta,S_{u})$ which is similar to cross-entropy by considering propensity scores as weights of cross-entropy:
\begin{equation}\label{eq: rkl final estim}
    \hat{L}_{\mathrm{WCE}}(\pi_\theta)\triangleq\sum_{i=1}^k \frac{1}{m_{a_i}}\sum_{\substack{(x,a_i,p)\\\in S_{u}\cup S}} -p\log(\pi_\theta(a_i|x)).
\end{equation}

\section{Algorithms and Experiments}\label{sec:experiments}
We briefly present our experiments. More details and discussions can be found in App.\ref{App: experiments}. 
We consider two approaches: softmax policy with linear model inspired by \citet{swaminathan2015batch,london_sandler2019}, and the softmax policy via deep model inspired by \citet{joachims2018deep}.

\textbf{Softmax policy with linear model:} Following the prior works of \citet{swaminathan2015batch,london_sandler2019}, we consider the stochastic softmax policy 
\begin{equation}
    \begin{split}
        \pi_{\tilde{\theta}}(a_i|x)=\frac{\exp(\tilde{\theta}.\phi(a_i,x))}{\sum_{j=1}^k \exp(\tilde{\theta}.\phi(a_j,x))},
    \end{split}
\end{equation}
where $\phi(a_i,x)$ is a feature map for $(a_i,x)$ and $\tilde{\theta}$ is the vector of parameters. Therefore, our learning policy is based on a linear model. 

\textbf{Softmax policy with deep model:} Following \citet{joachims2018deep}, we consider the output of a softmax layer in a neural network as a stochastic learning policy,
\begin{equation}
    \pi_{\theta}(a_i|x)=\frac{\exp(h_\theta(x,a_i))}{\sum_{j=1}^k \exp(h_\theta(x,a_j))},
\end{equation}
 where $h_\theta(x,a_i)$ is the $i$-th input to softmax layer for context $x\in \mathcal{X}$ and action $a_i\in \mathcal{A}$.

 \textbf{Baselines:} For linear model, we consider the Bayesian CRM, cf. \citet{london_sandler2019}, as a baseline to compare with our algorithms. More details for comparison of our algorithm with Bayesian CRM in provided in App.\ref{app: compare alg bcrm}. For deep model, we consider the BanditNet as a baseline in our experiment. More details regarding the BanditNet is provided in App.\ref{App: baselines}.

 \textbf{Algorithms:} The WCE-S2BL algorithm, proposed in Algorithm~\ref{Alg: WCE-S2BL}, is based on feedback-free regularized truncated IPS estimator in linear model via truncated weighted cross-entropy. 
The KL-S2BL algorithm is similar to Algorithm~\ref{Alg: WCE-S2BL} by replacing $\hat{L}_{\mathrm{WCE}}(\theta^{t_g})$ with $\hat{L}_{\mathrm{KL}}^\nu(\theta^{t_g})$ defined as
\begin{align*}
    \hat{L}_{\mathrm{KL}}^\nu(\theta^{t_g})=\sum_{i=1}^k \frac{1}{m_{a_i}}\sum_{\substack{(x,a_i,p) \\ \in S_{u}\cup S}} \pi_{\theta^{t_g}}(a_i|x)\log \left(\frac{\pi_{\theta^{t_g}}
   (a_i|x)}{\max(\nu,p)}\right).
\end{align*}
We examine the performance of the algorithms WCE-S2BL and KL-S2BL in both linear and deep models. For a fair comparison, we run experiments for WCE-S2BL and KL-S2BL using the logged known-feedback dataset for regularization. These algorithms are referred to as WCE-S2BLK and KL-S2BLK, respectively. Note that in linear model, we have truncated IPS estimator. However, in the deep model, we consider BanditNet which is based on self-normalized IPS estimator. Therefore, in the WCE-S2BL algorithm for deep model we replace the truncated IPS estimator via BanditNet approach \cite{joachims2018deep} in Algorithm~\ref{Alg: WCE-S2BL}.
\begin{algorithm}[ht!]
\caption{WCE-S2BL Algorithm for Linear Model}
\label{Alg: WCE-S2BL}
\begin{algorithmic}[1]
 \INPUT{ $S=(x_i,a_i,p_i,c_i)_{i=1}^n$ sampled from $\pi_0$, $S_{u}=(x_j,a_j,p_j)_{j=1}^m$ sampled from $\pi_0$, hyper-parameters $\lambda$ and $\nu$, initial policy $\pi_{\theta^0}(a|x)$, epoch index $t_g$ and max epochs for the whole algorithm $M$}
  
 \OUTPUT{ An optimized policy $\pi^\star_\theta(a|x)$ which minimize the regularized risk by weighted cross-entropy}

   \STATE For $t_g$  $\leq M$, sample $n$ samples $(x_i,a_i,p_i,c_i)$ from $S$ and estimate the re-weighted loss as $\hat{R}_\nu(\theta^{t_g})=\frac{1}{n}\sum_{i=1}^{n} c_i \frac{\pi_{\theta^{t_g}}(a_i|x_i)}{\max(\nu,p_i)}$. 
   	
   	\STATE Get the gradient with respect to $\theta^{t_g}$ as $g_1\leftarrow \nabla_{\theta^{t_g}}\hat{R}_\nu(\theta^{t_g}).$
    Sample $m$ samples from $ S_{u}$ and estimate the weighted cross-entropy loss ($\sum_{i=1}^k m_{a_i} = m$).

    \STATE Compute $\hat{L}_{\mathrm{WCE}}(\theta^{t_g})$\\ $=\sum_{i=1}^k \frac{1}{m_{a_i}}\sum_{(x,a_i,p)\in S_{u}\cup S} -p\log(\pi_{\theta^{t_g}}(a_i|x))$.
   
   	\STATE Get the gradient with respect to $\theta^{t_g}$ as\\
   	 $g_2\leftarrow \nabla_{\theta^{t_g}} \hat{L}_{\mathrm{WCE}}(\theta^{t_g})$.
   	 
   	\STATE Update $\theta^{t_g+1}=\theta^{t_g}-( g_1+ \lambda g_2)$.
   	
   	\STATE $t_g=t_g+1.$
   
   \end{algorithmic}
\end{algorithm}

\textbf{Datasets:} We apply the standard supervised to bandit transformation~\citep{beygelzimer2009offset} on two image classification datasets: Fashion-MNIST (FMNIST)~\citep{xiao2017/online} and CIFAR-10~\citep{krizhevsky2009learning}.
This transformation assumes that each of the ten classes in the datasets corresponds to an action. Then, a logging policy stochastically selects an action for every sample in the dataset. For each data sample $x$, action $a$ is sampled by logging policy. For the selected action, propensity score $p$ is determined by the softmax value of that action. If the selected action matches the actual label assigned to the sample, then we have $c=-1$, and $c=0$ otherwise. So, the 4-tuple $(x, a, p, c)$ makes up the dataset.{\blue In App.\ref{App: experiments}, we also consider other datasets, including CIFAR-100 and EMNIST and the real dataset Kuairec.}

\textbf{Logging policy:} 
To create logging policies with different performances, given inverse temperature \footnote{The inverse temperature $\tau$ is defined as $\pi_{0}(a_i|x)=\frac{\exp(h(x,a_i)/\tau)}{\sum_{j=1}^k \exp(h(x,a_j)/\tau)}$ where $h(x,a_i)$ is the $i$-th input to the softmax layer for context $x\in\mathcal{X}$ and action $a_i\in\mathcal{A}$. } $\tau\in\{1, 5, 10, 20\}$ we train a simplified ResNet architecture having a single residual layer in each block with inverse temperature $\tau$ in the softmax layer on the fully-labeled dataset, FMNIST. For CIFAR-10, we use linear model for logging policy, using pre-trained features as image representation. Then, we augment the dataset with the outputs and feedback of the trained policy, this time with inverse temperature equal to $1$ in the softmax layer. Hence, the learned policy is logged with inverse temperature $\tau$. Increasing $\tau$ leads to more uniform and less accurate logging policies. 

We evaluate the performance of the different algorithms based on the accuracy of the trained model. Inspired by \citet{london_sandler2019}, we calculate the accuracy for a deterministic policy where the accuracy of the model based on the argmax of the softmax layer output for a given context is computed.

To simulate the absence of feedback for logged missing-feedback datasets, we pretended that the feedback (cost) was only available in $\rho\in\{0.02,  0.2\}$ of the samples in each dataset, while the feedback of the remaining samples is missed. Recall that the regularization term is minimized via both logged known-feedback and logged missing-feedback datasets. 

For each value of $\tau$ and $\rho$ and for both types of deep and linear models, we apply WCE-S2BL, KL-S2BL, WCE-S2BLK and KL-S2BLK, and observe the accuracy over three runs.
Figure~\ref{fig:MeanAccuracyDeep} shows the accuracy of WCE-S2BL, WCE-S2BLK, KL-S2BL and KL-S2BLK  methods compared to BanditNet \cite{joachims2018deep} for the deep model approach, for $\tau=10$ and different number of known-feedback samples, in the FMNIST and CIFAR-10 datasets. The error bars represent the standard deviation over the three runs.
Figure~\ref{fig:MeanAccuracyLinear} shows similar results for the linear model.
Table~\ref{tab:comparison main-banditnet} shows the deterministic accuracy of WCE-S2BL, KL-S2BL, WCE-S2BLK, KL-S2BLK and BanditNet methods for $\tau\in \{1,10\}$, and $\rho\in\{0.02, 0.2\}$.
More results for other values of $\tau$ and $\rho$ are available in App.\ref{app: more results}. More experiments about the effect of logged missing-feedback dataset and the minimization of regularization terms are available at App.\ref{app: effect miss}.

\begin{figure}[htb!]
\centering
  \begin{subfigure}[FashionMNIST]
      {\includegraphics[width=0.45\columnwidth]{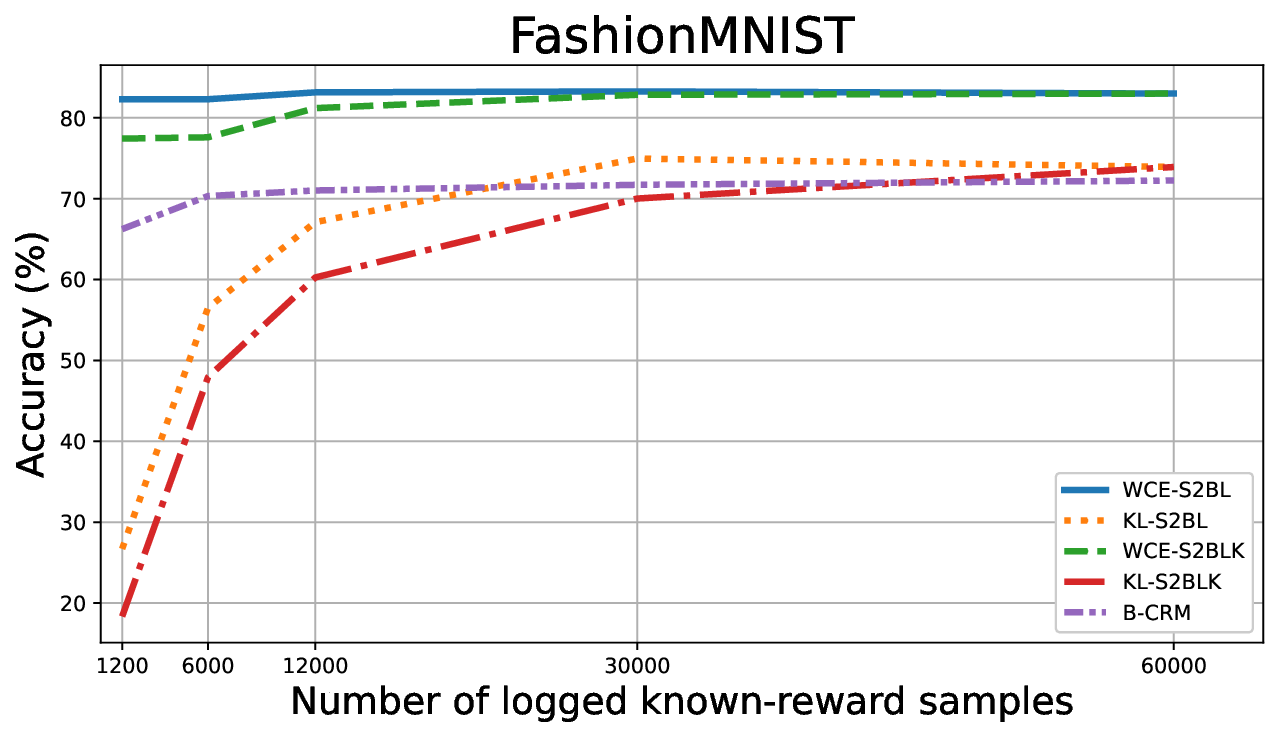}}
  \end{subfigure}
  \hfill
  \begin{subfigure}[CIFAR-10]{
     \includegraphics[width=0.45\columnwidth]{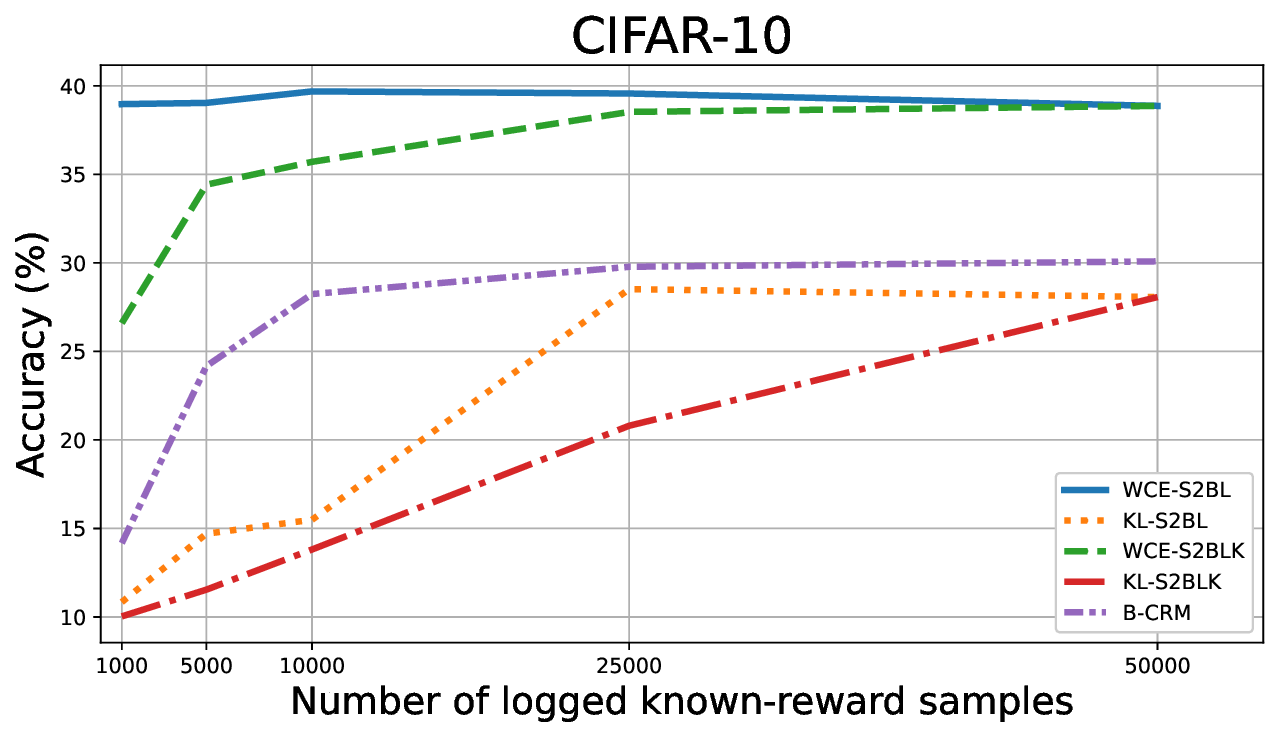}}
  \end{subfigure}
  \caption{Accuracy of WCE-S2BL, KL-S2BL,WCE-S2BLK, KL-S2BLK, and B-CRM for $\tau=10$.}
  \label{fig:MeanAccuracyLinear}
\end{figure}
\begin{figure}[htb!]
\centering
  \begin{subfigure}[FashionMNIST]
   { \includegraphics[width=0.45\columnwidth]{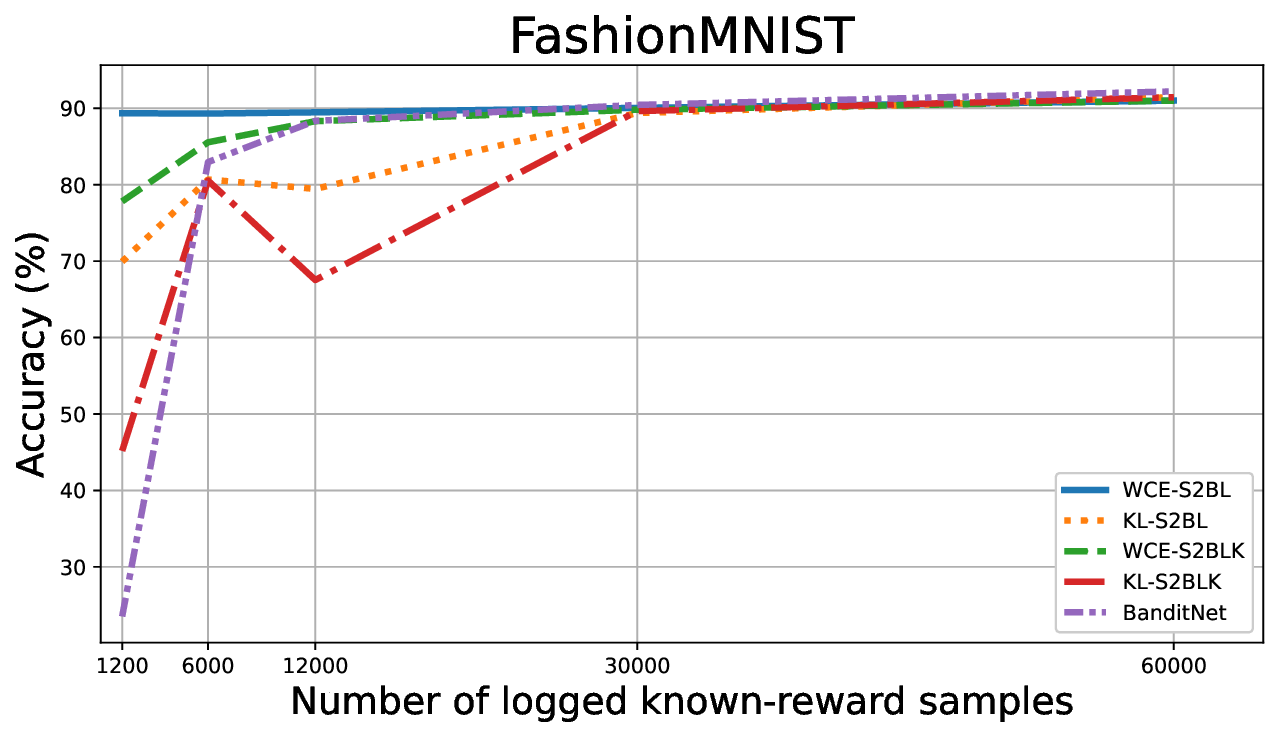}}
  \end{subfigure}
  \begin{subfigure}[CIFAR-10]
   { \includegraphics[width=0.45\columnwidth]{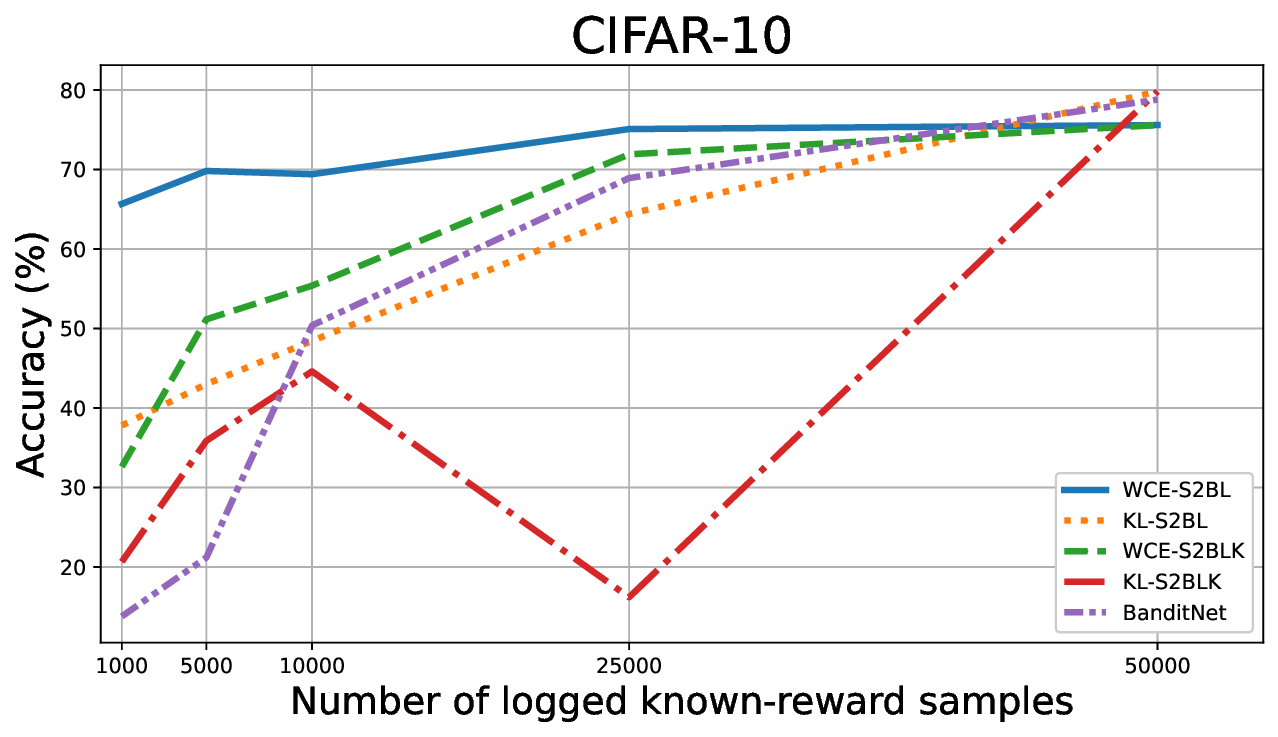}}
  \end{subfigure}
  \caption{Accuracy of WCE-S2BL, KL-S2BL,WCE-S2BLK, KL-S2BLK, and BanditNet for $\tau=10$.}
  \label{fig:MeanAccuracyDeep}
\end{figure}

\begin{table*}[htb]
  \caption{Comparison of different algorithms WCE-S2BL, KL-S2BL, WCE-S2BLK, KL-S2BLK and Bayesian-CRM (B-CRM) deterministic accuracy for FMNIST and CIFAR-10 with linear layer setup and for different qualities of logging policy ($\tau\in\{1,10\}$) and proportions of labeled data ($\rho\in\{0.02,0.2\}$). }
  \label{tab:comparison linear}
  \centering
  \resizebox{\textwidth}{!}{
	\begin{tabular}{ccccccccc}
	\\
 \toprule
    	Dataset & $\tau$ & $\rho$ & \textbf{WCE-S2BL} & \textbf{KL-S2BL} & \textbf{WCE-S2BLK} & \textbf{KL-S2BLK} & \textbf{B-CRM} & \textbf{Logging Policy}\\
    	\midrule
    	\multirow{4}{*}{FMNIST} & \multirow{2}{*}{1} & 0.02 &$84.37\pm 0.14$ & $71.67\pm 0.26$ & $78.84\pm 0.05$ & $74.71\pm 0.06$ & $64.67\pm 1.44$
 & \multirow{2}{*}{\bm{$91.73$}}\\
    	 &  & 0.2 & $83.59\pm 0.18$ & $71.88\pm 0.31$ & $83.05\pm 0.06$ & $74.06\pm 0.00$ & $70.99\pm 0.32$ &\\
      \cmidrule(lr{0.5em}){2-9}
    	 & \multirow{2}{*}{10} & 0.02 & \bm{$82.31\pm 0.07$} & $26.71\pm 2.18$ & $77.43\pm 0.13$ & $18.35\pm 7.06$ & $66.24\pm 00.03$
 & \multirow{2}{*}{$20.72$} \\
    	 & & 0.2 & \bm{$83.15\pm 0.09$} & $67.10\pm 5.17$ & $81.20\pm 0.12$ & $60.26\pm 0.88$ & $71.02\pm 0.30$ & \\
      \midrule
     \multirow{4}{*}{CIFAR-10} &  \multirow{2}{*}{1} & 0.02 & \bm{$62.95\pm0.08$} & $28.29\pm11.35$ & $9.49\pm0.72$ & $10.02\pm0.02$ & $55.02\pm0.14$ 
 & \multirow{2}{*}{$52.89$}\\
    	 &  & 0.2 & \bm{$62.83\pm0.06$} & $26.29\pm6.92$ & $62.90\pm0.10$ & $14.08\pm1.58$ & $57.75\pm0.42$ & \\
      \cmidrule(lr{0.5em}){2-9}
    	 & \multirow{2}{*}{10} & 0.02 & \bm{$54.47\pm1.34$} & $11.60\pm1.13$ & $41.93\pm1.25$ & $10.08\pm0.12$ & $44.66\pm0.29$
 & \multirow{2}{*}{$36.6$} \\
    	 & & 0.2 & \bm{$56.99\pm0.00$} & $22.83\pm0.46$ & $56.94\pm0.19$ & $13.69\pm2.69$ & $52.09\pm0.43$ &  \\
      \bottomrule
  \end{tabular}}
\end{table*}

\begin{table*}[htb]
  \caption{Comparison of different algorithms WCE-S2BL, KL-S2BL, WCE-S2BLK, KL-S2BLK and BanditNet deterministic accuracy for FMNIST and CIFAR-10 with deep model setup and different qualities of logging policy ($\tau\in\{1,10\}$) for different proportions of labeled data ($\rho\in\{0.02,0.2\}$). }
  \label{tab:comparison main-banditnet}
  \centering
  \resizebox{\textwidth}{!}{
	\begin{tabular}{ccccccccc}
	\\
 \toprule
    	Dataset & $\tau$ & $\rho$ & \textbf{WCE-S2BL} & \textbf{KL-S2BL} & \textbf{WCE-S2BLK} & \textbf{KL-S2BLK} & \textbf{BanditNet} & \textbf{Logging Policy}\\
    	\midrule
    	\multirow{4}{*}{FMNIST} & \multirow{2}{*}{1} & 0.2 & $\bm{93.16 \pm 0.18}$ & $92.04 \pm 0.13$  & $82.76\pm 4.45$ & $87.72\pm 0.53$ & $89.60\pm 0.49$ & $91.73$\\
    	 &  & 0.02 & $\bm{93.12\pm 0.16}$ & $91.79\pm 0.16$  & $78.66\pm 0.90$ & $61.46\pm 9.97$ & $78.64\pm 1.97$ & $91.73$\\
      \cmidrule(lr{0.5em}){2-9}
    	 & \multirow{2}{*}{10} & 0.2 & $\bm{89.47 \pm 0.3}$ & $79.45 \pm 0.75$  & $88.31\pm 0.14$ & $67.53\pm 2.06$ & $88.35\pm 0.45$ & $20.72$\\
    	 & & 0.02 & $\bm{89.35 \pm 0.15}$ & $69.94 \pm 0.60$  & $77.82\pm 0.73$ & $45.18\pm 19.82$ & $23.52\pm 3.15$ & $20.72$\\
      \midrule
     \multirow{4}{*}{CIFAR-10} &  \multirow{2}{*}{1} & 0.2 & $85.06 \pm 0.32$ & \bm{$85.53 \pm 0.56$}  & $58.04\pm 5.47$ & $54.12\pm 0.51$ &  $67.96\pm 0.62$ & $79.77$\\
     
    	 &  & 0.02 & \bm{$85.01 \pm 0.37$} & $84.60 \pm 0.65$  & $17.12\pm 0.97$ & $21.63\pm 1.44$ &  $27.39\pm 3.47$ & $79.77$ \\
      \cmidrule(lr{0.5em}){2-9}
    	 & \multirow{2}{*}{10} & 0.2 & $\bm{69.40 \pm 0.47}$ & $48.44 \pm 0.26$  & $55.38\pm 3.63$ & $44.60\pm 0.19$ & $50.38 \pm 0.55$ & $43.45$ \\
    	 & & 0.02 & $\bm{65.67 \pm 1.06}$ & $37.80 \pm 0.85$  & $32.61\pm 1.14$ & $20.66\pm 5.74$ & $13.78\pm 1.99$& $43.45$  \\
      \bottomrule
  \end{tabular}}
\end{table*}
Our methods achieve maximum accuracy even when the logging policy's accuracy is not well. For example, in Tables~\ref{tab:comparison main-banditnet} for the CIFAR-10 in deep model setup with $\tau=10$ and $\rho=0.2$, we observe \bm{$69.40\pm 0.47$} accuracy for WCE-S2BL in comparison with \bm{$50.38\pm 0.55$} and \bm{$43.45$} for BanditNet and logging policy, respectively. 
\subsection{Effect of logged missing-feedback dataset and minimization of the regularization}\label{app: effect miss}

We also run experiments to investigate the effect of the size of logged missing-feedback dataset. For this purpose, we fix the number of logged known-feedback dataset to $1000$ samples for CIFAR-10 and $1200$ for FMNIST. Then, we add $1000$, $4000$, $9000$, $24000$ missing-feedback samples to the dataset and compute the accuracy of the learning policy. Figure~\ref{fig:logged_missing_reward} shows the accuracy for different numbers of added missing-feedback samples for CIFAR-10 and FMNIST datasets over different ratio of logged missing-feedback samples to logged known-feedback samples. 
We observe that by increasing the number of missing-feedback logged data samples (the ratio of logged missing-feedback samples to logged known-feedback samples with fixed logged known-feedback sample size), the deterministic accuracy is improved. To provide more insight with respect to minimization of regularization, we run some experiments for deep model, to investigate the performance if we just minimize the regularization terms, i.e., KL divergence or reverse KL divergence, via the logged known-feedback dataset and missing-feedback datasets. The results are shown in Table~\ref{tab:comparison onle reg}. It can be noted that, under all circumstances, it is essential to minimize the regularized version of BanditNet for better accuracy. Therefore, both main loss and regularization are needed for better performance.
\begin{figure}[htb!]
\centering
  \begin{subfigure}[FashionMNIST]
    {\includegraphics[width=0.45\columnwidth]{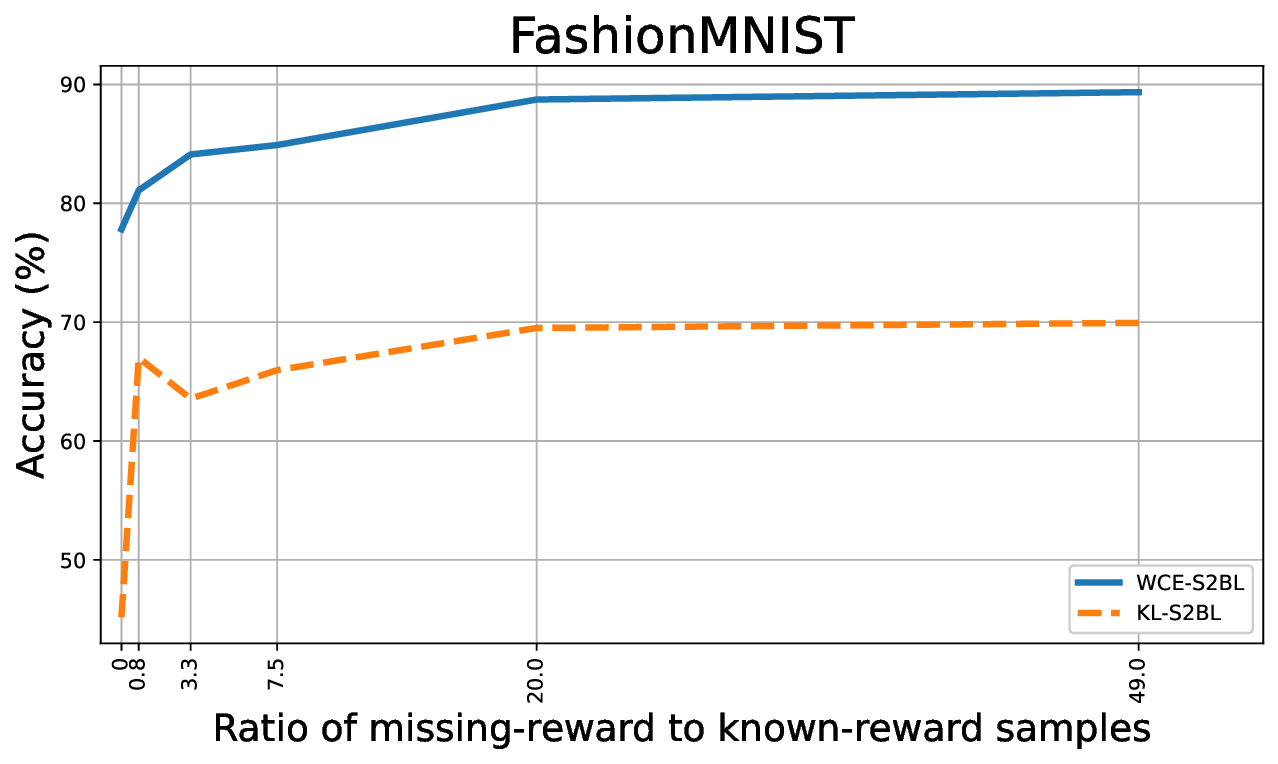}}
  \end{subfigure}
  \begin{subfigure}[CIFAR-10]
    {\includegraphics[width=0.45\columnwidth]{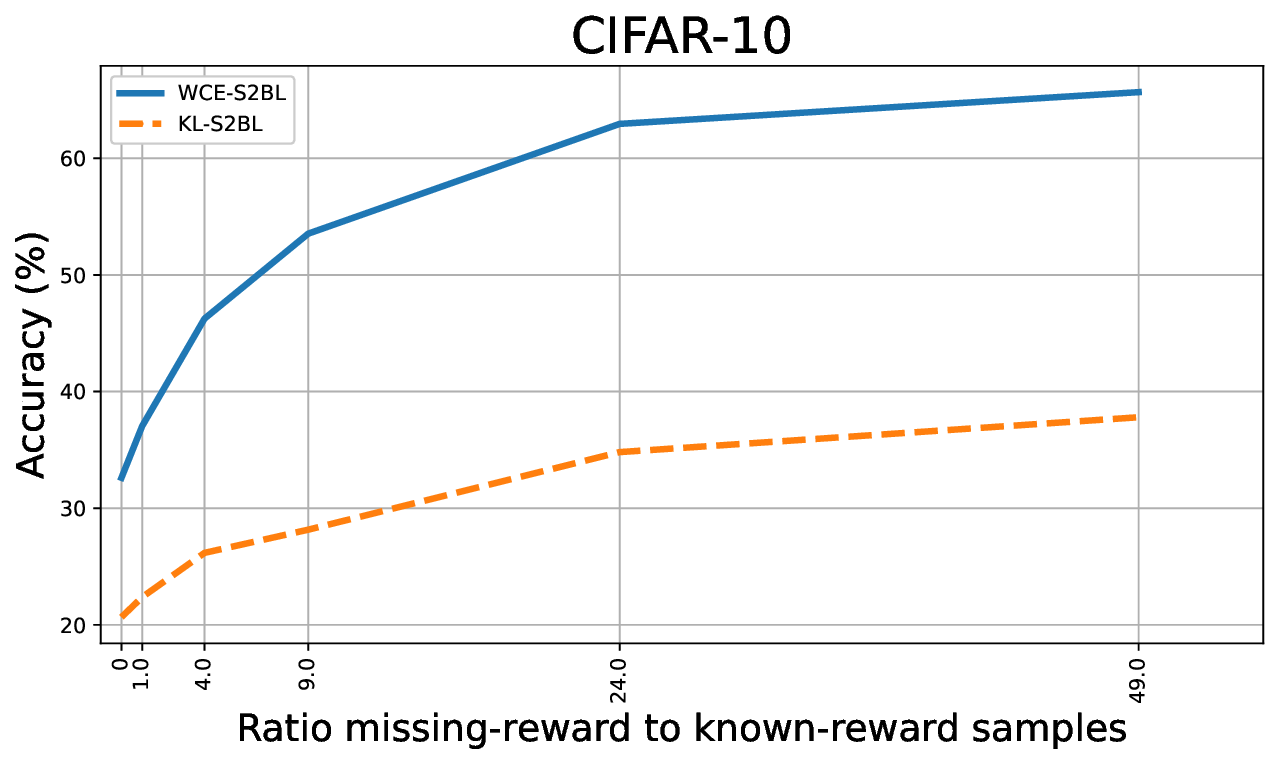}}
  \end{subfigure}
  \caption{Accuracy of WCE-S2BL and KL-S2BL for different ratio of missing-feedback data samples to known-feedback data samples. We fix the number of known-feedback data samples to $1000$ samples.}
  \label{fig:logged_missing_reward}
\end{figure}

\begin{table}[htb]
  \caption{Comparison of WCE-S2BL, KL-S2BL deterministic accuracy trained with $\rho=0.1$ and their counterpart without Self- normalized IPS (SNIPS)  as main loss in BanditNet. Accuracy on FMNIST and CIFAR-10 datasets is reported for $\tau\in\{1, 5, 10, 20\}$.}
  \label{tab:comparison onle reg}
  \centering
  \resizebox{\linewidth}{!}{
	\begin{tabular}{ccccccccc}
	\\
 \toprule
    	Dataset & $\tau$ & \textbf{WCE-S2BL} & \textbf{WCE-S2BL w/o SNIPS} & \textbf{KL-S2BL} & \textbf{KL-S2BL w/o SNIPS} & \\
    	\midrule
    	\multirow{4}{*}{FMNIST} & 1 & $\bm{93.26\pm 0.05}$ & $92.89\pm 0.09$ & $91.73\pm 0.08$ & $85.51\pm 1.20$\\
    	 & 5 & $\bm{90.79\pm 0.14}$ & $90.69\pm 0.19$ & $81.65\pm 0.02$ & $78.15\pm 1.57$\\
    	 & 10 & $\bm{89.31\pm 0.16}$ & $88.32\pm 0.16$ & $80.68\pm 0.46$ & $74.76\pm 0.73$ \\
    	 & 20 & $\bm{77.52\pm 0.63}$ & $14.15\pm 0.50$ & $76.89\pm 0.39$ & $13.86\pm 0.95$ \\
      \midrule
     \multirow{4}{*}{CIFAR-10} & 1 & $\bm{83.03\pm 1.49}$ & $83.34\pm 0.09$ & $84.34\pm 0.11$ & $64.69\pm 0.6$ \\
    	 & 5 & $\bm{74.13\pm 1.43}$ & $72.87\pm 0.76$ & $58.31\pm 0.52$ & $59.30\pm 0.56$ \\
    	 & 10 & $\bm{69.81\pm 0.87}$ & $66.75\pm 0.91$ & $43.00\pm 0.73$ & $42.92\pm 0.44$ \\
    	 & 20 & $\bm{33.61\pm 0.58}$ & $28.40\pm 0.07$ & $30.67\pm 1.35$ & $10.89\pm 0.71$ \\
      \bottomrule
  \end{tabular}}
\end{table}
{\purple\subsection{Discussion} In most cases, as shown in Tables ~\ref{tab:comparison main-banditnet} and \ref{tab:comparison linear} (also the extra experiments in App.\ref{App: experiments}),  WCE-S2BL can achieve a better policy and preserve a more stable behavior compared to baselines and the logging policy in both scenarios, linear and deep learning, if we have access to both logged datasets (known-feedback and missing-feedback). 
In KL-S2BL, which employs $\KLr(\pi_\theta\|\pi_0)$ as regularization, the  policy $\pi_\theta$ must be absolutely continuous with respect to the logging policy $\pi_0$. Thus, if the logging policy is zero at an optimal action for a given context, the learning policy cannot explore this action for the given context. 
Conversely, in WCE-S2BL, which uses the $\KLr(\pi_0\|\pi_\theta)$ for regularization, the logging policy has to be absolutely continuous with respect to the learning policy. Hence, when the logging policy is zero at an optimal action for a given context, the reverse KL regularization minimization framework provides an opportunity to explore this action for the given context and have more robust behaviour. It’s notable that by minimizing WCE-S2BL and KL-S2BL using only the logged known-feedback dataset (introduced as WCE-S2BLK and KL-S2BLK, respectively), we can observe improved performance with respect to the baselines in the most of experiments. This indicates that our regularization is also applicable even when exclusively using a logged known-feedback dataset.
More discussions are provided in App.\ref{app: discussion}.}
\section{Conclusion and future works}\label{sec:conclusion}
 We proposed an algorithm, namely, feedback-free regularized truncated IPS estimator, for Semi-supervised Batch Learning (S2BL) with logged data settings, effectively casting these kinds of problems into semi-supervised batch learning problems with logged known-feedback and missing-feedback datasets. The main take-away in feedback-free regularized batch learning is proposing regularization terms, i.e., KL divergence and reverse KL divergence between logging policy and learning policy, independent of feedback (cost) values, and also the minimization of these terms results in a tighter upper bound on true risk.  Experiments revealed that, in most cases, these algorithms can reach a learning policy performance superior to the partially unknown logging policy by exploiting the logged missing-feedback dataset and logged known-feedback dataset. In particular, the algorithm WCE-S2BL, inspired by reverse KL divergence, demonstrates superior performance over other algorithms in many cases.
 
The main limitation of this work is the assumption of access to a clean propensity score relating to the probability of an action given a context under the logging policy. We also use propensity scores in both the main objective function and the regularization term. However, we can estimate the propensity score using different methods, e.g., logistic regression~\citep{d1998propensity,weitzen2004principles}, generalized boosted models \citep{mccaffrey2004propensity}, neural networks~\citep{setoguchi2008evaluating}, parametric modeling~\citep{xie2019model} or classification and regression trees~\citep{lee2010improving,lee2011weight}. {\blue Note that, as discussed by \citet{tsiatis2006semiparametric,shi2016robust}, under the estimated propensity scores, the variance of IPS estimator reduces.} Therefore, a future line of research is to investigate how different methods of propensity score estimation can be combined with our algorithm to optimize the expected risk using logged known-feedback and missing-feedback datasets. Likewise, we believe that the idea of KL-S2BL and WCE-S2BL can be extended to semi-supervised feedback learning and using unlabeled data scenarios in reinforcement learning~\citep{konyushkova2020semi,yu2022leverage}.  As our current theoretical results hold for truncated IPS estimator, it would be interesting to investigate the effect of our proposed regularization methods on the variance of self-normalized IPS and other estimators \citep{dudik2011doubly,su2020doubly,metelli2021subgaussian,aouali2023exponential} in order to utilize the logged missing-feedback dataset. It is also interesting to apply our theoretical result (Proposition~\ref{Prop: bound KL}) to provide a lower confidence bound in pessimistic framework \cite{jin2021pessimism,jin2022policy} in terms of KL-divergence or reverse KL-divergence.

\section*{Acknowledgements} Gholamali Aminian acknowledges the support of the UKRI Prosperity Partnership Scheme (FAIR) under EPSRC Grant EP/V056883/1 and the Alan Turing Institute. 

\bibliography{refs}
\bibliographystyle{plainnat}

 \clearpage

\newpage
\appendix
\onecolumn

\section{Other Related Works}\label{App: related works}

In this section, we discuss more related works about direct methods, inverse reinforcement learning, individualized treatment effects, regularized reinforcement learning with KL divergence, semi-supervised learning, semi-supervised reinforcement learning, causal inference with missing outcomes and PAC-Bayesian approach.


\textbf{Direct Method:}  The direct method for off-policy learning from logged known-feedback datasets is based on the estimation of the cost function, followed by the application of a supervised learning algorithm to the problem ~\citep{dudik2014doubly}. However, this approach fails to generalize well, as shown by \citet{beygelzimer2009offset}. Another direct-oriented method for off-line policy learning, using the self-training approaches in semi-supervised learning, was proposed by \citet{gao2021enhancing}. A different approach based on policy optimization and boosted base learner is proposed to improve the performance in direct methods~\cite{london2023boosted}. Our approach differs from this area, as the cost function is not estimated and is based on semi-supervised batch learning with logged known-feedback and missing-feedback datasets.

\textbf{Inverse Reinforcement Learning:} 
Inverse RL, which aims to learn cost functions in a data-driven manner, has also been proposed for the setting of missing-feedback datasets in RL \citep{finn2016guided,konyushkova2020semi,abbeel2004apprenticeship}. The identifiability of cost function learning under entropy regularization is studied by \citet{cao2021identifiability}. Our work differs from this line of research, since we assume access to propensity score parameters, besides the context and action. Our logged known-feedback and missing-feedback datasets are under a fixed logging policy for all samples.

\textbf{Semi-Supervised Learning:}  There are some connections between our scenario, and semi-supervised learning~\citep{yang2021survey} approaches, including entropy minimization and pseudo-labeling.
In entropy minimization, an entropy function of predicted conditional distribution is added to the main empirical risk function, which depends on unlabeled data~ \citep{grandvalet2005semi}. The entropy function can be viewed as an entropy regularization and can lower the entropy of prediction on unlabeled data. In Pseudo-labeling, the model is trained using labeled data in a supervised manner and is also applied to unlabeled data in order to provide a pseudo label with high confidence~\citep{lee2013pseudo}. 
These pseudo-labels would be applied as inputs for another model, trained based on labeled and pseudo-label data in a supervised manner. Similar methods have been employed in the statistics literature \citep[see e.g.,][]{chakrabortty2018efficient,gronsbell2018semi}. Our work differs from the aforementioned semi-supervised learning as the logging policy biases our logged data, and the feedback for actions other than the chosen action are unavailable.
In semi-supervised learning, the label is missing for some of the data. In comparison, in our setup, the feedback is missing. 
{\blue Note that, inspired by the Pseudo-labeling algorithm in semi-supervised learning and also the work by \citet{konyushkova2020semi}, we can use a model based on the logged known-feedback dataset to assign pseudo-feedback to the logged missing-feedback dataset and then the final model is trained using the logged known-feedback dataset and logged missing-feedback dataset augmented by pseudo-feedback. Note that a regularization to reduce the variance of the IPS estimator can also be added. However, as discussed by \citet{beygelzimer2009offset}, the model fails to generalize well in the direct method where we estimate the cost function. Therefore, we do not study this method.}

\textbf{Individualized Treatment Effects:} 
The individual treatment effect aims to estimate the expected values of the squared difference between outcomes (rewards or feedback) for control and treated contexts \citep{shalit2017estimating}. In the individual treatment effect scenario, the actions are limited to two actions (treated/not treated) and the propensity scores are unknown~\citep{shalit2017estimating,johansson2016learning,alaa2017bayesian,athey2019generalized,shi2019adapting,kennedy2020towards,nie2021quasi}.{\blue Recently, the average treatment effects in semi-supervised settings (a.k.a. limited outcome data) from causal (or non-causal) inference perspective is studied by \citet{zhang2023semi,chakrabortty2022general,kallus2020role}. Our work differs from this line of works by considering larger action spaces and assuming the access to propensity scores for logged datasets.}

\textbf{Semi-Supervised Reinforcement Learning:} There are a few proposals that considered off-policy evaluation from logged data in semi-supervised learning settings from individual treatment effect \citep{sonabendsemi,cheng2021robust}. We target a different problem on off-policy learning. Recently, \citet{sonabend2020semi} and \citet{gunn2022adaptive} studied semi-supervised off-policy learning. However, an important aspect overlooked in their proposals is the regularization of the uncertainty associated with the value of the learning policy. This omission could potentially lead to sub-optimal policies in settings where specific actions have received limited exploration, a common occurrence in observational datasets \citep{levine2020offline}. 

\textbf{PAC-Bayesian Approach:} Some theoretical works for error analysis in this field are focused on the PAC-Bayesian approach (see \citet{Alquier2024user-friendly} for a comprehensive review). Relevant exmples of this line of work are e.g., \citet{london_sandler2019,sakhi2023pac,aouali2023exponential}. In particular, \citet{london_sandler2019} leveraged PAC-Bayesian theory inspired by \citet{mcallester2003simplified} to derive an upper bound on the population risk of the learning policy for truncated IPS in terms of the KL divergence, with prior and posterior distributions over the hypothesis space. Tighter generalization upper bounds via PAC-Bayesian approach is proposed by \citet{sakhi2023pac}. Meanwhile, \citet{aouali2023exponential} also applied the PAC-Bayesian approach to analyze the error of the proposed estimator. In this work, our approach is different from the PAC-Bayesian approach, and we provide an upper bound on the variance of the IPS estimator based on the KL divergence between the parameterized and logging policies.


\baselinestretch
{\blue
\section{Regularized via KL Divergence}\label{App: comp KL}
Our methods are based on regularization via KL divergence. In this discussion, we highlight the difference between our motivation for KL-divergence in comparison with other works. The KL divergence regularization with a logging policy and another learning policy is studied in off-policy reinforcement learning and batch learning~\cite{achiam2017constrained,wu2019behavior,levine2020offline,rudner2021pathologies,jaques2019way,kumar2020conservative,vieillard2020leverage}. Our work differs from this line of work by considering a counterfactual risk minimization framework. Our datasets also contain propensity scores, which are unavailable in off-policy reinforcement learning. Now, we discuss more details for the comparison with these works.
 
\subsection{Comparison with constrained policy optimization} \citet{achiam2017constrained} proposed searching for the optimal policy within a set $\Pi_{\theta}\subset \Pi$ of learning policies with parameters $\theta$. For this purpose, the optimization is done over a local neighborhood of the most recent iterate policy measured via a distance, i.e.,
$$\min_{\pi_{\theta}^k}\hat{R}(\pi_\theta,S), \quad \text{s.t.} \quad D(\pi_{\theta}^k,\pi_{\theta}^{k-1})\leq \delta, $$
where $D(\cdot,\cdot)$ is a distance measure, e.g., total variation distance. Then, by applying the Pinsker inequality, the constraint would be in terms of square root of KL divergence between successive policies during parametric policy iteration to avoid large steps. 

Our work differs from constrained policy optimization, due to,
\begin{itemize}
    \item We motivate the KL regularization (or reverse KL regularization) from variance reduction of truncated IPS estimator which is different from policy constraint approach as discussed above.
    \item Our divergences are between the learning policy at each iteration and the logging policy. However, in constrained policy optimization, the KL divergence or distance measure is computed between two successive policies during policy iterations. 
    \item In addition, we also consider the reverse KL regularization, WCE-S2BL algorithm, which is different from common divergence in constraint policy method, which is KL-divergence.
\end{itemize}

\subsection{Comparison with behavior regularized offline reinforcement learning}
Behavior Regularized Offline Reinforcement Learning (BRAC)~\cite{wu2019behavior} introduces an actor-critic framework that incorporates behavior regularization using KL divergence. This framework ensures the learning policy stays close to the logging policy while optimizing for reward. However, in our work, we also introduce KL divergence between logging policy and learning policy which is different. In addition, in our estimator of KL divergence, we are using the propensity scores which is different from BRAC approach.

\subsection{Comparison with Conservative Q-Learning}
\citet{kumar2020conservative} addressed the issue of overestimation bias in off-policy Q-learning. It proposes a novel Conservative Q-Learning loss function that incorporates KL divergence regularization, between the learning policy and a prior distribution over actions, to encourage the Q-function to be conservative at states rarely visited by the logging policy. Therefore, their motivation for KL regularization is different from variance reduction (our motivation). In addition, we utilize also KL divergence between logging policy and learning policy and vice versa.
\subsection{Comparison with KL-Regularized Reinforcement Learning from Expert Demonstrations}
The work \cite{rudner2021pathologies} focuses on using KL-regularized RL where an expert demonstration policy acts as a logging policy, influencing the learning policy direction. However, the authors show that this method, i.e., regularized via KL-divergence between the learning policy and the logging policy, can suffer from pathological training dynamics. These dynamics lead to slow learning, instability and suboptimal results. Our work differs from this work, by incorporating both KL divergence and reverse KL divergence, motivated by the variance reduction. Furthermore, we have introduced regularization through reverse KL divergence, surpassing the performance of the KL-regularized scenario.

\subsection{Comparison with Off-Policy Batch Deep Reinforcement Learning of Implicit Human Preferences in Dialog}
A  class of off-policy batch RL algorithms capable of learning effectively from a fixed batch of human interaction data, even without exploration is introduce by \cite{jaques2019way}. These algorithms leverage KL-divergence between learning policy of Q-network and prior distribution over the trajectory. Again, this work is limited in studying the KL divergence regularization and the reverse KL divergence regularization is overlooked.

}

\section{Comparison with Bayesian-CRM}\label{sec: compare London}
In this section, we compare our work with \citet{london_sandler2019} from both theoretical and algorithm perspectives. 

\subsubsection{Comparison with Theorem~\ref{Theorem: main result}}\label{app: compare theory bcrm}
We compare our Theorem~\ref{Theorem: main result} result with \citep[Theorem~1]{london_sandler2019}.
The upper bound on true risk in \citep[Theorem~1]{london_sandler2019} is derived by using the PAC-Bayesian approach, where stochastic policies with action distributions induced by distributions over hypotheses. In particular, the probability of an action $a\in\mathcal{A}$ given a context $x\in\mathcal{X}$, is equal to the probability of a random hypothesis for mapping $h:x\mapsto a$, where the probability of random hypothesis can be induced by prior or posterior distribution, $\mathbb{Q}$ or $\mathbb{P}$.

Suppose that we fix the parameter space for hypotheses set. As discussed, in \citep[Section~3.1]{london_sandler2019}, if we consider the prior distribution equal to logging policy, then KL divergence $\KLr(\mathbb{P}\|\mathbb{Q})$ can be interpreted as $\KLr(\pi_\theta\|\pi_0)$. Therefore, we can compare our upper bound in Theorem~\ref{Theorem: main result} with \citep[Theorem~1]{london_sandler2019} as follows:
\begin{itemize}
    \item Our upper bound is based on the minimum of KL divergence $D(\pi_\theta(A|X)\|\pi_0(A|X))$ and reverse KL divergence $D(\pi_0(A|X)\|\pi_\theta(A|X))$ and the upper bound in \citep[Theorem~1]{london_sandler2019} is based on reverse KL divergence only.
    \item The upper bound in \citep[Theorem~1]{london_sandler2019} has the dominating term with rate $O(\sqrt{\frac{\log(n)}{n}})$ and our upper bound contains a term with rate  $O(\frac{1}{\sqrt{n}})$ which dominates the bound.
\end{itemize}
It is worthwhile to mention that the main advantage of our bound over the PAC-Bayesian is the dependency over the reverse KL divergence, i.e $\KLr(\pi_0\|\pi_\theta)$. It helps us to define the WCE-S2BL algorithm based on $\KLr(\pi_0\|\pi_\theta)$ as regularization.

\subsubsection{Comparison with Algorithms}\label{app: compare alg bcrm}
There are two main methods proposed in \cite{london_sandler2019}.
\begin{itemize}
\item \textbf{IPS-LPR:} It is inspired by \citep[Proposition~1]{london_sandler2019} and the authors propose to minimize the following objective function,
\begin{equation}\label{eq: BCRM}
    \min_{\theta}\frac{1}{n}\sum_{i=1}^n c_i \frac{\pi_{\theta}(a_i|x_i)}{\max(p_i,\nu)}+\lambda_b \|\theta-\theta_0\|^2,
\end{equation}
where $\lambda_b$ is the hyper-parameter and $\theta_0$ is the mean of parameter under prior (logging policy). If we know the logging policy, we can compute the $\theta_0$. Otherwise, we should estimate the mean of logging policy distribution via logged known-feedback dataset. The learning policy is trained via the logged known-feedback dataset. It is worthwhile to mention that in B-CRM, \eqref{eq: BCRM}, it is assumed that the posterior variance, or variance of parameters $\theta$, is fixed to some small value, e.g., $n^{-1}$. However, in our setup, we directly, estimate the KL divergence and we have no assumption on variance of parameters. In \eqref{eq: BCRM}, after the estimation of $\theta_0$, the regularization is similar to $L_2$- regularization of model parameters and it is minimized jointly with the truncated IPS estimator via logged-known-feedback dataset to derive the parameterized logging policy.

\item \textbf{WNLL-LPR:} Another algorithm is also proposed in \cite{london_sandler2019} as WNLL-LPR where the following regularized function would be minimized,
\begin{equation}\label{eq: WNLL}
    \min_{\theta}\frac{1}{n}\sum_{i=1}^n c_i \frac{\log(\pi_{\theta}(a_i|x_i))}{\max(p_i,\nu)}+\lambda_b \|\theta-\theta_0\|^2.
\end{equation}
Note that the main objective function in WNLL-LPR is an upper bound on IPS-LPR as the feedback (cost) are non-positive, $c_i\in[-1.0]$. It's also observable that, contrasting with IPS-LPR, which can have negative values, WNLL-LPR remains positive. Therefore, WNLL-LPR is not a tight upper bound. Similarly to IPS-LPR, the regularization is minimized via the logged known-feedback dataset after setting $\theta_0$.

\end{itemize}

\section{Preliminaries}
\begin{lemma}\label{Lemma: donsker}
Suppose that $f(X)$ is $\sigma$-sub-Gaussian under distribution $Q_X$. Then, considering the difference of expectations of $f(X)$ with respect to a distribution $P_X$ and the distribution $Q_X$, the following upper bound holds:
\begin{align}
    |\mathbb{E}_{P_X}[f(X)]-\mathbb{E}_{Q_X}[f(X)]|\leq \sqrt{2\sigma ^2 \KLr(P_X\|Q_X)}
\end{align}
\end{lemma}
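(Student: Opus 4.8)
The plan is to prove Lemma~\ref{Lemma: donsker} through the Gibbs (Donsker--Varadhan) variational representation of the KL divergence, which converts the gap between the two expectations into a statement about the log-moment-generating function of $f$. First I would recall that for any $\lambda \in \mathbb{R}$ with finite MGF, the change-of-measure inequality $\lambda\,\mathbb{E}_{P_X}[f(X)] \leq D(P_X\|Q_X) + \log \mathbb{E}_{Q_X}[e^{\lambda f(X)}]$ holds; this is precisely the statement that $D(P_X\|Q_X) = \sup_g\{\mathbb{E}_{P_X}[g] - \log\mathbb{E}_{Q_X}[e^{g}]\}$ evaluated at the test function $g = \lambda f$ (equivalently, an application of Jensen's inequality to the density $\log \frac{dP_X}{dQ_X}$).

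Second, I would feed the sub-Gaussian hypothesis into the log-MGF term. By the definition of $\sigma$-sub-Gaussianity, $\log \mathbb{E}[e^{\lambda(f(X) - \mathbb{E}[f(X)])}] \leq \lambda^2\sigma^2/2$ for all $\lambda$, so $\log\mathbb{E}_{Q_X}[e^{\lambda f(X)}] \leq \lambda\,\mathbb{E}_{Q_X}[f(X)] + \lambda^2\sigma^2/2$. Substituting and writing $\Delta \triangleq \mathbb{E}_{P_X}[f(X)] - \mathbb{E}_{Q_X}[f(X)]$ yields $\lambda\Delta \leq D(P_X\|Q_X) + \lambda^2\sigma^2/2$ for every $\lambda$. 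Viewing the right-hand side as a quadratic in $\lambda$ and minimizing $D(P_X\|Q_X) + \lambda^2\sigma^2/2 - \lambda\Delta$ at $\lambda^\star = \Delta/\sigma^2$ forces $\Delta^2 \leq 2\sigma^2 D(P_X\|Q_X)$, i.e. $\Delta \leq \sqrt{2\sigma^2 D(P_X\|Q_X)}$. Applying the same argument to $-f$, which is also $\sigma$-sub-Gaussian since the sub-Gaussian MGF bound is symmetric in the sign of $\lambda$, gives the matching lower bound, and the two together produce the absolute value.

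The step I expect to require the most care is the bookkeeping of which measure controls the log-MGF: the Donsker--Varadhan inequality with $D(P_X\|Q_X)$ naturally demands sub-Gaussian control of $f$ under the second argument $Q_X$, whereas the lemma phrases the hypothesis under $P_X$. In the intended application (Proposition~\ref{Prop: bound KL Var}) the integrand is assumed $\sigma$-sub-Gaussian under both $P_X\otimes\pi_0$ and $P_X\otimes\pi_\theta$, so either ordering of the KL divergence is admissible; I would therefore invoke the MGF bound under whichever measure sits in the second slot of the divergence and note that this symmetric assumption is exactly what later licenses both the KL and reverse-KL instantiations. The remaining details—verifying finiteness of the MGF, justifying the optimization over $\lambda$, and the reduction to the absolute value—are routine.
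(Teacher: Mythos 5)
Your proof is correct and is essentially identical to the paper's: both invoke the Donsker--Varadhan representation with test function $\lambda f$, bound the log-MGF via sub-Gaussianity, and conclude by optimizing over $\lambda$ (the paper phrases this final step as a nonpositive-discriminant condition on the resulting quadratic in $\lambda$, which is equivalent to your explicit choice $\lambda^\star = \Delta/\sigma^2$). Your remark about which measure must control the log-MGF is well taken---the paper's proof silently uses sub-Gaussianity under $Q_X$, the second argument of the KL, even though the lemma's hypothesis states it under $P_X$---and your resolution via the two-sided sub-Gaussianity assumed in Proposition~\ref{Prop: bound KL Var} is exactly how the lemma is applied there.
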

\begin{proof}
From the Donsker-Varadhan representation of KL divergence~\citep{polyanskiy2014lecture}, for $\gamma\in \mathbb{R}$ we have:
\begin{align}
    \KLr(P_X\|Q_X)&\geq \mathbb{E}_{P_X}[\gamma f(X)]-\log(\mathbb{E}_{Q_X}[e^{\gamma f(X)}])
    \\
    \label{eq: subgaus}
    &\geq \gamma (\mathbb{E}_{P_X}[ f(X)]-\mathbb{E}_{Q_X}[ f(X)])-\frac{\gamma^2\sigma^2}{2} ,
\end{align}
where \eqref{eq: subgaus} is the result of sub-Gaussian assumption. We have: 
\begin{align}
    \label{eq: para}
    \frac{\gamma^2\sigma^2}{2}- \gamma (\mathbb{E}_{P_X}[ f(X)]-\mathbb{E}_{Q_X}[ f(X)])+ \KLr(P_X\|Q_X)&\geq 0.
\end{align}
As in~\eqref{eq: para}, we have a quadratic in $\gamma$, which is positive and has a non-positive discriminant, then the final result holds.
\end{proof}



\section{Proofs and Details of Section~\ref{Sec: var bounds}}\label{App: Proofs var}
We first prove the following Lemma:

\begin{repproposition}{Prop: bound KL Var}\textbf{(restated)} Suppose that the importance weighted of squared cost function, i.e., $w(A,X)c^2(A,X)$, is $\sigma$-sub-Gaussian under $P_X\otimes \pi_0(A|X)$ and $P_X\otimes \pi_\theta(A|X)$, and the cost function has bounded range $[b_1,b_2]$ with $b_2\geq 0$. Then, the following upper bound holds on the variance of the importance weighted cost function:
\begin{align}
    &\operatorname{Var}\left(w(A,X)c(A,X)\right)\leq 
    \sqrt{2\sigma^2 \min(\KLr(\pi_\theta\|\pi_0),\KLr(\pi_0\|\pi_\theta)) }+b_u^2-b_l^2,
\end{align}
where $b_l=\max(b_1,0)$, $b_u=\max(|b_1|,b_2)$, $\KLr(\pi_\theta\|\pi_0)=\KLr(\pi_\theta(A|X)\|\pi_0(A|X))$ and
$\KLr(\pi_0\|\pi_\theta)=\KLr(\pi_0(A|X)\|\pi_\theta(A|X))$.
\end{repproposition}
\begin{proof}
Note that $b_l^2\leq R^2(\pi_\theta)\leq b_u^2$ where $b_l=\max(b_1,0)$ and $b_u=\max(|b_1|,b_2)$.
\begin{align}
    \operatorname{Var}\left(w(A,X)c(A,X)\right)&= \mathbb{E}_{P_X\otimes\pi_{0}(A|X)}\left[\left(w(A,X)c(A,X)\right)^2\right]- R^2(\pi_\theta)
    \\
    &\leq \mathbb{E}_{P_X\otimes\pi_{0}(A|X)}\left[\left(w(A,X)c(A,X)\right)^2\right]-b_l^2,
\end{align}
where $b_l=\max(b_1,0)$.
We need to provide an upper bound on $\mathbb{E}_{P_X\otimes\pi_{0}(A|X)}\left[\left(w(A,X)c(A,X)\right)^2\right]$. First, we have:
\begin{align}
    \label{Eq: equal form}
    \mathbb{E}_{P_X\otimes\pi_{0}(A|X)}\left[\left(w(A,X)c(A,X)\right)^2\right]
    &=\mathbb{E}_{P_X\otimes\pi_{0}(A|X)}\left[\left(\frac{\pi_\theta(A|X)}{\pi_0(A|X)}c(A,X)\right)^2\right]
    \\
    &=\mathbb{E}_{P_X\otimes\pi_\theta(A|X)}\left[\frac{\pi_\theta(A|X)}{\pi_0(A|X)}\left(c(A,X)\right)^2\right].
\end{align}
Using Lemma~\ref{Lemma: donsker} and assuming sub-Gaussianity under $P_X\otimes\pi_0(A|X)$ we have:
\begin{align}
    &\left|\mathbb{E}_{P_X\otimes\pi_\theta(A|X)}\left[\frac{\pi_\theta(A|X)}{\pi_0(A|X)}\left(c(A,X)\right)^2\right]-\mathbb{E}_{P_X\otimes\pi_0(A|X)}\left[\frac{\pi_\theta(A|X)}{\pi_0(A|X)}\left(c(A,X)\right)^2\right]\right| 
    \nn
    \\
    \label{Eq: proof donsker KL}
    &\leq\sqrt{2\sigma^2 \KLr(\pi_\theta(A|X)\|\pi_0(A|X)|P_X)},
\end{align}
and since $c(A,X)\in[b_1,b_2]$, we have:
\begin{align}
    \label{Eq: proof bounded reward}
    \mathbb{E}_{P_X\otimes\pi_0(A|X)}\left[\frac{\pi_\theta(A|X)}{\pi_0(A|X)}\left(c(A,X)\right)^2\right]=\mathbb{E}_{P_X\otimes\pi_\theta(A|X)}\left[\left(c(A,X)\right)^2\right]\leq b_u^2.
\end{align}
Considering \eqref{Eq: proof bounded reward} and \eqref{Eq: proof donsker KL}, the following result holds:
\begin{align}
    \label{Eq:  donsker KL}
    &\mathbb{E}_{P_X\otimes\pi_\theta(A|X)}\left[\frac{\pi_\theta(A|X)}{\pi_0(A|X)}\left(c(A,X)\right)^2\right]\leq \sqrt{2\sigma^2 \KLr(\pi_\theta(A|X)\|\pi_0(A|X))}+b_u^2,
\end{align}
By a similar argument and the sub-Gaussianity under $P_X\otimes\pi_\theta(A|X)$, we have:
\begin{align}
    \label{Eq:  donsker RKL}
    &\mathbb{E}_{P_X\otimes\pi_\theta(A|X)}\left[\frac{\pi_\theta(A|X)}{\pi_0(A|X)}\left(c(A,X)\right)^2\right]\leq \sqrt{2\sigma^2 \KLr(\pi_0(A|X)\|\pi_\theta(A|X))}+b_u^2,
\end{align}
And the final result holds by considering \eqref{Eq:  donsker KL}, \eqref{Eq:  donsker RKL}, and \eqref{Eq: equal form}.
\end{proof}

\begin{remark}[Uniform Coverage (Overlap) Assumption]
In the uniform coverage (overlap) assumption, it is assumed that 
\begin{equation}\label{eq: unif ass}\sup_{(a,x)\in\mathcal{A}\times \mathcal{X}}\frac{\pi_\theta(a|x)}{\pi_0(a|x)}= U_c<\infty.\end{equation}
     In this work, we assume that the importance weighted of squared cost function, i.e., $w(A,X)c^2(A,X)$, is $\sigma$-sub-Gaussian under $P_X\otimes \pi_0(A|X)$ and $P_X\otimes \pi_\theta(A|X)$. Given the constraint of a bounded reward function, the uniform coverage assumption~\eqref{eq: unif ass} implies $\sigma=\frac{U_c(b_2-b_1)}{2}$, leading to the validity of the result in Proposition~\ref{Prop: bound KL}. It's important to highlight that the sub-Gaussian assumption is a weaker assumption compared to the uniform coverage assumption. Additionally, for the sub-Gaussianity of $w(A,X)c^2(A,X)$, it is necessary, under a bounded cost function, to assume that $w(A,X)$ is itself sub-Gaussian. 
\end{remark}

\begin{repcorollary}{cor: bound KL Var}(\textbf{restated})
Suppose the cost function has a bounded range $[b_1,0]$ and a truncated IPS estimator with $\nu\in(0,1]$. Then the following upper bound holds on the variance of the truncated importance weighted cost function:
\begin{align}
   &\operatorname{Var}_{(A,X)\sim \pi_0(A|X)\otimes P_X}\left(w_\nu(A,X)c(A,X)\right)\leq 
    b_1^2\nu^{-1}\sqrt{ \min(\KLr(\pi_\theta\|\pi_0),\KLr(\pi_0\|\pi_\theta))/2 }+b_1^2,
\end{align}
where $w_\nu(A,X)=\frac{\pi_\theta(A,X)}{\max(\nu,\pi_0(A,X))}$, $\KLr(\pi_\theta\|\pi_0)=\KLr(\pi_\theta(A|X)\|\pi_0(A|X))$ and
$\KLr(\pi_0\|\pi_\theta)=\KLr(\pi_0(A|X)\|\pi_\theta(A|X))$.
\end{repcorollary}
\begin{proof}
Define $R_\nu(\pi_\theta):= \mathbb{E}_{(A,X)\sim \pi_0(A|X)\otimes P_X}\left[w_\nu(A,X)c(A,X)\right]$. Note that $0\leq R_\nu^2(\pi_\theta)\leq b_1^2$.
\begin{align}
    \operatorname{Var}\left(w(A,X)c(A,X)\right)&= \mathbb{E}_{P_X\otimes\pi_{0}(A|X)}\left[\left(w^{\nu}(A,X)c(A,X)\right)^2\right]- R_\nu^2(\pi_\theta)
    \\
    &\leq \mathbb{E}_{P_X\otimes\pi_{0}(A|X)}\left[\left(w^{\nu}(A,X)c(A,X)\right)^2\right].
\end{align}
We need to provide an upper bound on $\mathbb{E}_{P_X\otimes\pi_{0}(A|X)}\left[\left(w^{\nu}(A,X)c(A,X)\right)^2\right]$. First, we have:
\begin{align}
    \label{Eq: equal form-trun}
    \mathbb{E}_{P_X\otimes\pi_{0}(A|X)}\left[\left(w^{\nu}(A,X)c(A,X)\right)^2\right]
    &=\mathbb{E}_{P_X\otimes\pi_{0}(A|X)}\left[\left(\frac{\pi_\theta(A|X)}{\max(\pi_0(A|X),\nu)}c(A,X)\right)^2\right]
    \\
    &\leq\mathbb{E}_{P_X\otimes\pi_\theta(A|X)}\left[\frac{\pi_\theta(A|X)}{\max(\pi_0(A|X),\nu)}\left(c(A,X)\right)^2\right].
\end{align}
Using Lemma~\ref{Lemma: donsker} and the fact that the function
\[0\leq \frac{\pi_\theta(A|X)}{\max(\pi_0(A|X),\nu)}\left(c(A,X)\right)^2\leq \frac{b_1^2}{\nu},\]
is $\frac{b_1^2}{2\nu}$-sub-Gaussian under any distribution, then we have:
\begin{align}
    &\left|\mathbb{E}_{P_X\otimes\pi_\theta(A|X)}\left[\frac{\pi_\theta(A|X)}{\max(\pi_0(A|X),\nu)}\left(c(A,X)\right)^2\right]-\mathbb{E}_{P_X\otimes\pi_0(A|X)}\left[\frac{\pi_\theta(A|X)}{\max(\pi_0(A|X),\nu)}\left(c(A,X)\right)^2\right]\right| 
    \nn
    \\
    \label{Eq: proof donsker KL-trun}
    &\leq\frac{b_1^2}{\nu\sqrt{2}}\sqrt{ \KLr(\pi_\theta(A|X)\|\pi_0(A|X))},
\end{align}
and since $c(A,X)\in[b_1,0]$, we have:
\begin{align}
    \label{Eq: proof bounded reward-trun}
    \mathbb{E}_{P_X\otimes\pi_0(A|X)}\left[\frac{\pi_\theta(A|X)}{\max(\pi_0(A|X),\nu)}\left(c(A,X)\right)^2\right]=\mathbb{E}_{P_X\otimes\pi_\theta(A|X)}\left[\left(c(A,X)\right)^2\right]\leq b_1^2.
\end{align}
Considering \eqref{Eq: proof bounded reward} and \eqref{Eq: proof donsker KL}, the following result holds:
\begin{align}
    \label{Eq:  donsker KL-trunc}
    &\mathbb{E}_{P_X\otimes\pi_\theta(A|X)}\left[\frac{\pi_\theta(A|X)}{\pi_0(A|X)}\left(c(A,X)\right)^2\right]\leq b_1^2\nu^{-1}\sqrt{ \KLr(\pi_\theta(A|X)\|\pi_0(A|X))/2}+b_1^2.
\end{align}
By a similar argument and the sub-Gaussianity under $P_X\otimes\pi_\theta(A|X)$, we have:
\begin{align}
    \label{Eq:  donsker RKL-trunc}
    &\mathbb{E}_{P_X\otimes\pi_\theta(A|X)}\left[\frac{\pi_\theta(A|X)}{\pi_0(A|X)}\left(c(A,X)\right)^2\right]\leq b_1^2\nu^{-1}\sqrt{ \KLr(\pi_0(A|X)\|\pi_\theta(A|X))/2}+b_1^2.
\end{align}
And the final result holds by considering \eqref{Eq:  donsker KL}, \eqref{Eq:  donsker RKL}, and \eqref{Eq: equal form}.
\end{proof}
 We now provide a novel lower bound on the variance of the weighted cost function in the following Proposition.
\begin{proposition}(proved in App.\ref{App: Proofs var})
\label{Prop: lower bound}
Suppose that $q\leq e^{\mathbb{E}_{P_X\otimes \pi_{\theta}(A,X)}[\log(|c(A,X)|)]}$, the cost function has bounded range $[b_1,b_2]$ with $b_2\geq 0$, and consider $b_u=\max(|b_1|,b_2)$. Then, the following lower bound holds on the variance of importance weighted cost function,
\begin{align}
    \operatorname{Var}\left(w(A,X)c(A,X)\right)\geq q^2 e^{\KLr(\pi_\theta(A|X)\|\pi_0(A|X))}-b_u^2.
\end{align}
\end{proposition}

\begin{proof}
Note that $b_l^2\leq R^2(\pi_\theta)\leq b_u^2$ where $b_l=\max(b_1,0)$ and $b_u=\max(|b_1|,b_2)$.
\begin{align}
    \operatorname{Var}\left(w(A,X)c(A,X)\right)
    &= \mathbb{E}_{P_X\otimes\pi_{0}(A|X)}\left[\left(w(A,X)c(A,X)\right)^2\right]- R^2(\pi_\theta)
    \\
    &\geq \mathbb{E}_{P_X\otimes\pi_{0}(A|X)}\left[\left(w(A,X)c(A,X)\right)^2\right]-b_u^2.
\end{align}
 First, we have:
\begin{align}
    \label{Eq: equal form 1}
    \mathbb{E}_{P_X\otimes\pi_{0}(A|X)}\left[\left(w(A,X)c(A,X)\right)^2\right]
    &=\mathbb{E}_{P_X\otimes\pi_{0}(A|X)}\left[\left(\frac{\pi_\theta(A|X)}{\pi_0(A|X)}c(A,X)\right)^2\right]
    \\
    \label{eq: prop21}
    &=\mathbb{E}_{P_X\otimes\pi_\theta(A|X)}\left[\frac{\pi_\theta(A|X)}{\pi_0(A|X)}\left(c(A,X)\right)^2\right].
\end{align}
Considering \eqref{eq: prop21}, we provide a lower bound on $\mathbb{E}_{P_X\otimes\pi_\theta(A|X)}\left[\frac{\pi_\theta(A|X)}{\pi_0(A|X)}\left(c(A,X)\right)^2\right]$ as follows:
\begin{align}
    \mathbb{E}_{P_X\otimes\pi_\theta(A|X)}\left[\frac{\pi_\theta(A|X)}{\pi_0(A|X)}\left(c(A,X)\right)^2\right]
    &=\mathbb{E}_{P_X\otimes\pi_\theta(A|X)}\left[e^{\log(\frac{\pi_\theta(A|X)}{\pi_0(A|X)})+2\log(|c(A,X)|)}\right]
    \\
    &\geq e^{\mathbb{E}_{P_X\otimes\pi_\theta(A|X)}[\log(\frac{\pi_\theta(A|X)}{\pi_0(A|X)})+2\log(|c(A,X)|)]}
    \label{Eq: jensen prop2}
    \\
    &=e^{\KLr(\pi_\theta(A|X)\|\pi_0(A|X))}(e^{\mathbb{E}_{P_X\otimes\pi_\theta(A|X)}[\log(|c(A,X)|)]})^2
    \nn
    \\
    &\geq q^2 e^{\KLr(\pi_\theta(A|X)\|\pi_0(A|X))}.
    \nn
\end{align}
Where \eqref{Eq: jensen prop2} is based on Jensen-inequality for an exponential function.
\end{proof}
\begin{remark}
If we consider $c(a,x)\in[b_1,b_2]$ with $b_2\geq 0$, then we can consider $q=\max(0,b_1)$. 
\end{remark}
The lower bound on the variance of importance weights in Proposition~\ref{Prop: lower bound} can be minimized by minimizing the KL divergence or reverse KL divergence between $\pi_\theta$ and $\pi_0$.
\begin{reptheorem}{Theorem: main result}\textbf{(restated)}
Suppose the cost function takes values in $[-1,0]$. Then, for any $\delta\in(0,1)$, the following bound on the true risk of policy $\pi_\theta(A|X)$ with the truncated IPS estimator (with parameter $\nu\in(0,1]$) holds with probability at least $1-\delta$ under the distribution $P_X \otimes \pi_0(A|X)$:
\begin{align}
    R(\pi_\theta)\leq  \hat{R}_\nu(\pi_\theta,S)+ \frac{2 \log(\frac{1}{\delta})}{3\nu n}+\sqrt{\frac{ (\nu^{-1}\sqrt{ 2\min(\KLr(\pi_\theta\|\pi_0),\KLr(\pi_0\|\pi_\theta)) }+2)   \log(\frac{1}{\delta})}{n}},
\end{align}
where $\KLr(\pi_\theta\|\pi_0)=\KLr(\pi_\theta(A|X)\|\pi_0(A|X))$ and
$\KLr(\pi_0\|\pi_\theta)=\KLr(\pi_0(A|X)\|\pi_\theta(A|X))$.
\end{reptheorem}

\begin{proof}
Define $R_\nu(\pi_\theta):= \mathbb{E}_{(A,X)\sim \pi_0(A|X)\otimes P_X}\left[w_\nu(A,X)c(A,X)\right]$. Note that we have $0\leq R_\nu^2(\pi_\theta)\leq 1$ and 
\[R(\pi_\theta)\leq R_\nu(\pi_\theta).\] Let us consider $Z=\frac{\pi_\theta(A|X)}{\max(\pi_0(A|X),\nu)}c(A,X)$ and $|Z|\leq \nu^{-1}$. Then, we have:
\begin{align}
    \label{Eq: proof var upper}
    \operatorname{Var}(Z)&=\mathbb{E}_{P_X\otimes\pi_{0}(A|X)}\left[\left(\frac{\pi_\theta(A|X)}{\max(\pi_0(A|X),\nu)}c(A,X)\right)^2\right]-R_{\nu}^2(\pi_\theta)
    \\
    \nn
    &\leq \nu^{-1}\sqrt{ \frac{\min(\KLr(\pi_\theta\|\pi_0),\KLr(\pi_0\|\pi_\theta))}{2} }+1,
\end{align}
where $\KLr(\pi_\theta\|\pi_0)=\KLr(\pi_\theta(A|X)\|\pi_0(A|X))$ and
$\KLr(\pi_0\|\pi_\theta)=\KLr(\pi_0(A|X)\|\pi_\theta(A|X))$.
Using Bernstein inequality \citep{boucheron2013concentration}, we also have:
\begin{align}
    \label{Eq: Proof bern}
    Pr\left(R_{\nu}(\pi_\theta)-\hat{R}_{\nu}(\pi_\theta,S)>\epsilon \right) \leq \exp\left(\frac{-n\epsilon^2/2}{\operatorname{Var}(Z)+\epsilon \nu^{-1}/3}\right).
\end{align}
By setting $\delta=\exp\Big(\frac{-n\epsilon^2/2}{\operatorname{Var}(Z)+\epsilon \nu^{-1}/3}\Big)$ to match the upper bound in \eqref{Eq: Proof bern} and using the variance upper bound \eqref{Eq: proof var upper}, the following upper bound with probability at least $(1-\delta)$ holds under $P_X\otimes \pi_0(A|X)$:
\begin{align}
    R(\pi_\theta)&\leq R_\nu(\pi_\theta)
    \\\nn
    & \leq\hat{R}_{\nu}(\pi_\theta,S)+ \frac{\nu^{-1} \log(\frac{1}{\delta})}{3n}\\\label{eq: 1}&\quad+\sqrt{\frac{\nu^{-2} \log^2(\frac{1}{\delta})}{9n^2}+\frac{ (\nu^{-1}\sqrt{ 2\min(\KLr(\pi_\theta\|\pi_0),\KLr(\pi_0\|\pi_\theta)) }+2) \log(\frac{1}{\delta})}{n}},
\end{align}
By applying $\sqrt{x+y}\leq \sqrt{x}+\sqrt{y}$ to the last term in \eqref{eq: 1}, the final result holds.
\end{proof}

\begin{repproposition}{Prop: bound KL}\textbf{(restated)}
The following upper bound holds on the absolute difference between risks of logging policy $\pi_0(a|x)$ and the policy $\pi_\theta(a|x)$:
\begin{align}
     |R(\pi_\theta)-R(\pi_0)|\leq \min\left(\sqrt{\frac{\KLr(\pi_\theta\|\pi_0)}{2}},\sqrt{\frac{\KLr(\pi_0\|\pi_\theta)}{2}}\right),
\end{align}
where $\KLr(\pi_\theta\|\pi_0)=\KLr(\pi_\theta(A|X)\|\pi_0(A|X))$ and
$\KLr(\pi_0\|\pi_\theta)=\KLr(\pi_0(A|X)\|\pi_\theta(A|X))$.
\end{repproposition}
\begin{proof}
We have:
\begin{align}
   R(\pi_\theta)&=\mathbb{E}_{P_{X}}[\mathbb{E}_{\pi_{\theta}(A|X)}[c(A,X)]].
   \\
   R(\pi_0)&=\mathbb{E}_{P_{X}}[\mathbb{E}_{\pi_{0}(A|X)}[c(A,X)]].
\end{align}
As the cost function is bounded in $[-1,0]$, then it is $\frac{1}{2}$-sub-Gaussian under all distributions. By considering Lemma~\ref{Lemma: donsker}, the final result holds.
\end{proof}
\subsection{Proposition~\ref{Prop: bound KL Var} Comparison}
\label{App: comp kl vs chi}
Without loss of generality, let us consider $c(a,x)\in[-1,0]$. For $\sup_{(x,a)\in \mathcal{X}\times \mathcal{A}} \frac{\pi_\theta(a|x)}{\pi_0(a|x)}=\nu^{-1}<\infty$. The upper bound in Corollary~\ref{cor: bound KL Var} by considering the KL divergence $\KLr(\pi_\theta\|\pi_0)$ can be written as
\begin{align}
    \label{Eq: proof upper kl}
    &\mathbb{E}_{P_X\otimes\pi_{0}(A|X)}\left[\left(\frac{\pi_\theta(A|X)}{\pi_0(A|X)}c(A,X)\right)^2\right]\leq 
    \nu^{-1}\sqrt{ \frac{\KLr(\pi_\theta(A|X)\|\pi_0(A|X))}{2} }+1.
\end{align}
The upper bound on the second moment of importance weighted cost function in \citet[Lemma~1]{cortes2010learning} is as follows:
\begin{align}
    \label{Eq: proof Chi-square div}
    \mathbb{E}_{P_X\otimes\pi_{0}(A|X)}\left[\left(\frac{\pi_\theta(A|X)}{\pi_0(A|X)}c(A,X)\right)^2\right]\leq 
    \chi^2(\pi_\theta(A|X)\|\pi_0(A|X))+1.
\end{align}
It is shown by \cite{sason2016f} that:
\begin{align}
    \label{Eq: KL vs chi}
    D(\pi_\theta(A|X)\|\pi_0(A|X))\leq \log(\chi^2(\pi_\theta(A|X)\|\pi_0(A|X))+1).
\end{align}
Using \eqref{Eq: KL vs chi} in \eqref{Eq: proof upper kl} and comparing to \eqref{Eq: proof Chi-square div}, then for $\nu^{-1}<e^2-1$, $\exists C\in[0,\nu^{-1}]$, e.g. if $\nu^{-1}=2$ we have $C\approx1.28$, where if
$\chi^2(\pi_\theta(A|X)\|\pi_0(A|X))\geq C$, then we have:
\begin{align}
    \log(\chi^2(\pi_\theta(A|X)\|\pi_0(A|X))+1)\leq \frac{2 (\chi^2(\pi_\theta(A|X)\|\pi_0(A|X)))^2}{\nu^{-2}}.
\end{align}
Therefore, the upper bound in Proposition~\ref{Prop: bound KL Var} is tighter than \citet[Lemma~1]{cortes2010learning} for $\chi^2(\pi_\theta(A|X)\|\pi_0(A|X))\geq C$ if $\nu^{-1}<e^2-1$ and $C$ is the solution of $\log(1+x)-2x^2/\nu^{-2}=0$.

\section{Proofs and Details of Section~\ref{Sec: Semi-supervised CRM Algorithms}}\label{App: Proofs semi}

\begin{repproposition}{Prop: estimators}\textbf{(restated)}
Suppose that $\KLr(\pi_\theta(A|X)\|\pi_0(A|X))$ and the reverse $\KLr(\pi_0(A|X)\|\pi_\theta(A|X))$ are bounded.
Assuming $m_{a_i}\rightarrow \infty$ $(\forall a_i\in \mathcal{A})$, then
 $\hat{L}_{\mathrm{KL}}(\pi_\theta)$ and $\hat{L}_{\mathrm{RKL}}(\pi_\theta)$ are unbiased estimations of $\KLr(\pi_\theta(A|X)\|\pi_0(A|X))$ and $\KLr(\pi_0(A|X)\|\pi_\theta(A|X))$, respectively.
\end{repproposition}
\begin{proof}
First, we have the following decomposition:
\begin{align}
    &\KLr(\pi_\theta(A|X)\|\pi_0(A|X))
    =\sum_{i=1}^k \mathbb{E}_{P_X}\Big[(\pi_\theta(A=a_i|X)\log\Big(\frac{\pi_\theta(A=a_i|X)}{\pi_0(A=a_i|X)}\Big)\Big]
    \\
    &\KLr(\pi_0(A|X)\|\pi_\theta(A|X))=\sum_{i=1}^k \mathbb{E}_{P_X}\Big[\pi_0(A=a_i|X)\log\Big(\frac{\pi_0(A=a_i|X)}{\pi_\theta(A=a_i|X)}\Big)\Big].
\end{align}
It suffices to show that:
\begin{align}
    \label{Eq: proof decomp KL}
    &\hat{R}_{\mathrm{KL}}(\pi_\theta)\triangleq\sum_{i=1}^k \frac{1}{m_{a_i}}\sum_{(x,a_i,p)\in S_{u}\cup S} \pi_\theta(a_i|x)\log\Big(\frac{\pi_\theta(a_i|x)}{p}\Big),
    \\
    \label{Eq: proof decomp RKL}
    &\hat{R}_{\mathrm{RKL}}(\pi_\theta)\triangleq\sum_{i=1}^k \frac{1}{m_{a_i}}\sum_{(x,a_i,p)\in S_{u}\cup S} -p\log(\pi_\theta(a_i|x))+ p\log(p),
\end{align}
As we assume the divergences $\KLr(\pi_\theta(A|X)\|\pi_0(A|X))$ and $\KLr(\pi_0(A|X)\|\pi_\theta(A|X))$ are bounded, then $\mathbb{E}_{P_X}[\pi_0(a_i|X)\log(\frac{\pi_0(a_i|X)}{\pi_\theta(a_i|X)})]$ and $\mathbb{E}_{P_X}[\pi_\theta(a_i|x)\log(\frac{\pi_\theta(a_i|x)}{\pi_0(a_i|x)})]$ $\forall i\in[k]$ exist and they are bounded. 
Due to the Law of Large Numbers~\cite{hsu1947complete}, we have that:
\begin{align}
    \frac{1}{m_{a_i}}\sum_{(x,a_i,p)\in S_{u}} \pi_0(a_i|x)\log\Big(\frac{\pi_0(a_i|x)}{\pi_\theta(a_i|x)}\Big)\xrightarrow{m_{a_i}\rightarrow \infty}\mathbb{E}_{P_X}\Big[\pi_0(a_i|X)\log\Big(\frac{\pi_0(a_i|X)}{\pi_\theta(a_i|X)}\Big)\Big],
\end{align}
and
\begin{align}
    \frac{1}{m_{a_i}}\sum_{(x,a_i,p)\in S_{u}} \pi_\theta(a_i|x)\log\Big(\frac{\pi_\theta(a_i|x)}{\pi_0(a_i|x)}\Big)\xrightarrow{m_{a_i}\rightarrow \infty}\mathbb{E}_{P_X}\Big[\pi_\theta(a_i|x)\log\Big(\frac{\pi_\theta(a_i|x)}{\pi_0(a_i|x)}\Big)\Big].
\end{align}
By considering \eqref{Eq: proof decomp KL}, \eqref{Eq: proof decomp RKL} and $m_{a_i}\rightarrow \infty$, $\forall i\in[k]$, the final results hold.
\end{proof}
We also provide an upper bound on the estimation error of the proposed estimator in Proposition~\ref{Prop: estimators}. Let us define \[f_{\KLr}(x,a):=\pi_\theta(A=a|X=x)\log\Big(\frac{\pi_\theta(A=a|X=x)}{\pi_0(A=a|X=x)}\Big),\] and \[g_{\mathrm{RKL}}(x,a):=\pi_0(A=a|X=x)\log\Big(\frac{\pi_0(A=a|X=x)}{\pi_\theta(A=a|X=x)}\Big).\]
Note that 
\[\mathbb{E}_{P_X}[f_{\KLr}(X,a_i)]=\KLr(\pi_\theta(A=a_i|X)\|\pi_0(A=a_i|X)),\]
and \[\mathbb{E}_{P_X}[g_{\mathrm{RKL}}(X,a_i)]=\KLr(\pi_0(A=a_i|X)\|\pi_\theta(A=a_i|X)).\]
\begin{proposition}\label{Prop: error analysis}
     Assume that $|f_{\KLr}(x,a)|\leq B$ and $|g_{\mathrm{RKL}}(x,a)|\leq C$ for all $x\in\mathcal{X}$ and $a\in \mathcal{A}$. Then, the following upper bounds hold on error of estimators of KL divergence and reverse KL divergence in Proposition~\ref{Prop: estimators}, under distribution $P_X$ with probability at least $1-\delta$ for $\delta\in(0,1]$,
    \begin{equation}
        \Big|\KLr(\pi_\theta(A|X)\|\pi_0(A|X))-\hat{R}_{\mathrm{KL}}(\pi_\theta)\Big|\leq B\sqrt{2\log(k/\delta)}\sum_{i=1}^k \sqrt{\frac{1}{m_{a_i}}},
    \end{equation}
    and similarly, we have 
    \begin{equation}
   \Big| \KLr(\pi_0(A|X)\|\pi_\theta(A|X))-\hat{R}_{\mathrm{RKL}}(\pi_\theta)\Big|\leq C\sqrt{2\log(k/\delta)}\sum_{i=1}^k \sqrt{\frac{1}{m_{a_i}}}.
    \end{equation}
\end{proposition}
\begin{proof}
    From Hoeffding’s inequality~\cite{boucheron2013concentration}, for each action $a_i\in\mathcal{A}$, the following upper bound holds with probability at least $(1-\delta)$ under distribution $P_X$,
    \begin{equation}
      \Big | \mathbb{E}_{P_X}[f_{\KLr}(X,a_i)]-\frac{1}{m_{a_i}}\sum_{j=1}^{m_{a_i}}f_{\KLr}(x_j,a_i)\Big|\leq B\sqrt{\frac{2\log(1/\delta)}{m_{a_i}}},
    \end{equation}
    and similarly 
      \begin{equation}
      \Big | \mathbb{E}_{P_X}[g_{\mathrm{RKL}}(X,a_i)]-\frac{1}{m_{a_i}}\sum_{j=1}^{m_{a_i}}g_{\mathrm{RKL}}(x_j,a_i)\Big|\leq C\sqrt{\frac{2\log(1/\delta)}{m_{a_i}}}.
    \end{equation}
    Using the Union bound~\cite{vershynin2018high} and considering $|\mathcal{A}|=k$, the following upper bound holds on the estimation error of the proposed estimator in Proposition~\ref{Prop: estimators} under distribution $P_X$ with probability at least $(1-k\delta)$ for $\delta\in(0,1/k]$,
    \begin{equation}
    \begin{split}
     &\Big|\KLr(\pi_\theta(A|X)\|\pi_0(A|X))-\hat{R}_{\mathrm{KL}}(\pi_\theta)\Big|\\
     &\leq \sum_{i=1}^k \Big|\KLr(\pi_\theta(A=a_i|X)\|\pi_0(A=a_i|X))-\frac{1}{m_{a_i}}\sum_{j=1}^{m_{a_i}}f_{\KLr}(x_j,a_i)\Big|\\
     &\leq B\sqrt{2\log(1/\delta)}\sum_{i=1}^k \sqrt{\frac{1}{m_{a_i}}},
         \end{split}
    \end{equation}
    and similarly, we have, 
    \begin{equation}
    \begin{split}
   &\Big| \KLr(\pi_0(A|X)\|\pi_\theta(A|X))-\hat{R}_{\mathrm{RKL}}(\pi_\theta)\Big|\\
   &\leq  \sum_{i=1}^k \Big|\KLr(\pi_0(A=a_i|X)\|\pi_\theta(A=a_i|X))-\frac{1}{m_{a_i}}\sum_{j=1}^{m_{a_i}}g_{\mathrm{RKL}}(x_j,a_i)\Big|\\
   &\leq C\sqrt{2\log(1/\delta)}\sum_{i=1}^k \sqrt{\frac{1}{m_{a_i}}}.
    \end{split}
    \end{equation}
    The final result holds by consider the scaling $\delta/k$.
\end{proof}

\begin{remark}
    Suppose that we have equal number of samples per action in the set $S\cup S_u$, i.e., $m_{a_i}=\frac{m+n}{k}$. Then the estimation error of KL divergence and reverse KL divergence is,
     \begin{equation}
        \Big|\KLr(\pi_\theta(A|X)\|\pi_0(A|X))-\hat{R}_{\mathrm{KL}}(\pi_\theta)\Big|\leq B\sqrt{2\log(k/\delta)}k \sqrt{\frac{k}{m+n}},
    \end{equation}
    and
    \begin{equation}
   \Big| \KLr(\pi_0(A|X)\|\pi_\theta(A|X))-\hat{R}_{\mathrm{RKL}}(\pi_\theta)\Big|\leq C\sqrt{2\log(k/\delta)}k \sqrt{\frac{k}{m+n}}.
    \end{equation}
    Therefore, for fix number of actions, $k$, if we increase the number of unlabeled samples, i.e., $m$, the estimation error decreases.
\end{remark}
\subsection{Regret Upper Bound}\label{app: regret upper bound}

Using our current theoretical results, we can derive an upper bound on regret, i.e., $|R(\pi^\star_\theta)- R(\pi^r_{\theta})|$, where the solution to our KL-regularized risk minimization is denoted by $\pi^r_{\theta}$.

\begin{theorem}\label{thm: regret bound}
    Suppose that the cost function takes values in $[-1,0]$. Then for any $\delta \in (0,1)$, the following bound on the regret of  $\pi^r_{\theta}(A|X)$ with the truncated IPS estimator holds with probability at least $(1-\delta)$ under distribution $P_X\otimes \pi_0(A|X)$,

     \[ | R(\pi^\star_\theta)- R(\pi^r_{ \theta }) |  \leq \hat{R}_\nu ( \pi^\star_\theta , S) - \hat{R}_\nu (\pi_{\theta}^r , S)  + \frac{ 4 \log(2/\delta)}{ 3\nu n } + \sqrt{ \frac{2\log( 2/\delta ) M }{ n }}, \]

where $M=\min\big(\mathrm{KL}(\pi^\star_\theta\|\pi_0),\mathrm{KL}(\pi_0\|\pi^\star_\theta)\big)+\min\big(\mathrm{KL}(\pi^r_{\theta}\|\pi_0),\mathrm{KL}(\pi_0\|\pi^r_{\theta})\big)$.
\end{theorem}

\begin{proof}
    
In Theorem 1, our upper bound holds on true risks of any learning policy $\pi_\theta(A|X)$. Therefore, it also holds for optimal $\pi^\star_{\theta}$ and  Therefore, using the following decomposition, we have
\[R(\pi^\star_\theta)- R(\pi^r_{\theta}) = I_1 + I_2 +I_3,\]
where 
\[\begin{split}   
I_1&:= R ( \pi^\star_\theta) - \hat{R}_\nu (\pi^\star_\theta , S), 
\\
 I_2&:= \hat{R}_\nu (\pi^\star_\theta,S)- \hat{R}_\nu (\pi^r_{\theta},S), \\
 I_3&:= \hat{R}_\nu (\pi^r_{\theta},S) -R (\pi^r_{\theta}).
\end{split}
\]
Therefore, we have,
\[|R(\pi^\star_\theta)- R(\pi^r_{\theta})| \leq |I_1| + I_2 + |I_3|,\]
where we can apply Theorem 1 on $|I_1|$ and $|I_3|$, to provide an upper bound. Subsequently, the following upper bound holds on regret of our regularized algorithm with probability at least $1-\delta$ for $\delta\in(0,1)$,

\[ | R(\pi^\star_\theta)- R(\pi^r_{ \theta }) |  \leq \hat{R}_\nu ( \pi^\star_\theta , S) - \hat{R}_\nu (\pi_{\theta}^r , S)  + \frac{ 4 \log(2/\delta)}{ 3\nu n } + \sqrt{ \frac{2\log( 2/\delta ) M }{ n }}, \]

where $M=\min\big(\mathrm{KL}(\pi^\star_\theta\|\pi_0),\mathrm{KL}(\pi_0\|\pi^\star_\theta)\big)+\min\big(\mathrm{KL}(\pi^r_{\theta}\|\pi_0),\mathrm{KL}(\pi_0\|\pi^r_{\theta})\big)$. Therefore, our results can be applied to provide an upper bound on the regret of our algorithm. 
\end{proof}

We can observe from Theorem~\ref{thm: regret bound}, where the upper bound on the regret depends on KL divergence or reverse KL divergence between the pair $(\pi^\star_\theta,\pi_0)$ and $(\pi_0,\pi_\theta^r)$.

\section{True Risk Regularization}\label{App: true risk minimizer}

We can choose the KL divergence instead of the square root of the KL divergence as a regularizer for IPS estimator minimization. In this section, we study the true risk regularization using KL divergence  $\KLr(\pi_\theta(A|X)\|\pi_0(A|X))$, as follows:
\begin{align}\label{Eq: true reg minimize}
   \min_{\pi_\theta} R(\pi_\theta) + \lambda \KLr(\pi_\theta(A|X)\|\pi_0(A|X)), \quad \lambda\geq 0.
\end{align}
It is possible to provide the optimal solution to regularized minimization \eqref{Eq: true reg minimize}.
\begin{theorem}
Considering the true risk minimization with KL divergence regularization,
\begin{align}
   \min_{\pi_\theta} R(\pi_\theta) + \lambda \KLr(\pi_\theta(A|X)\|\pi_0(A|X)), \quad \lambda\geq 0,
\end{align}
the optimal learning policy is:
\begin{align}
    \pi_\theta^\star(A=a|X=x)=\frac{\pi_0(A=a|X=x)e^{-\frac{1}{\lambda }c(a,x)}}{\mathbb{E}_{\pi_0}[e^{-\frac{1}{\lambda }c(a,x)}]}.
\end{align}
\end{theorem}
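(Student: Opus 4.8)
The plan is to exploit the fact that the objective in \eqref{Eq: true reg minimize} decomposes as an expectation over $P_X$ of a term that depends, for each fixed context $x$, only on the conditional law $\pi_\theta(\cdot|x)$. Writing out both functionals,
\begin{align*}
\alpha R(\pi_\theta)+(1-\alpha)D(\pi_\theta(A|X)\|\pi_0(A|X)|P_X)
=\mathbb{E}_{P_X}\!\left[\sum_{a}\pi_\theta(a|x)\Big(\alpha f_r(a,x)+(1-\alpha)\log\tfrac{\pi_\theta(a|x)}{\pi_0(a|x)}\Big)\right],
\end{align*}
so it suffices to minimize, for $P_X$-almost every $x$, the inner objective $g_x(\pi)\triangleq\alpha\sum_a\pi(a)f_r(a,x)+(1-\alpha)\sum_a\pi(a)\log\frac{\pi(a)}{\pi_0(a|x)}$ over the probability simplex on $\mathcal{A}$. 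Since $\pi_\theta$ is unconstrained across contexts, the pointwise minimizers assemble into a global minimizer.

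For each fixed $x$ I would avoid second-order checks by a completing-the-KL argument (the Gibbs variational principle). Define the candidate
\begin{align*}
\pi^\star(a|x)=\frac{\pi_0(a|x)e^{-\frac{\alpha}{1-\alpha}f_r(a,x)}}{Z(x)},\qquad Z(x)=\sum_{a'}\pi_0(a'|x)e^{-\frac{\alpha}{1-\alpha}f_r(a',x)}=\mathbb{E}_{\pi_0}\!\left[e^{-\frac{\alpha}{1-\alpha}f_r(a,x)}\right].
\end{align*}
A direct substitution of $\pi^\star$ into the relative entropy shows that for every $\pi$ on the simplex,
\begin{align*}
g_x(\pi)=(1-\alpha)\,D\big(\pi\,\|\,\pi^\star(\cdot|x)\big)-(1-\alpha)\log Z(x).
\end{align*}
Because $D(\pi\|\pi^\star)\geq 0$ with equality iff $\pi=\pi^\star$, and the second term is independent of $\pi$, the unique minimizer is $\pi=\pi^\star(\cdot|x)$, which is exactly the claimed policy. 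For $\alpha\in(0,1)$ the factor $1-\alpha>0$ preserves the direction of optimization; the boundary case $\alpha=1$ is handled separately since then the KL term vanishes and $g_x$ is linear. An equivalent route is to introduce a Lagrange multiplier $\lambda$ for the constraint $\sum_a\pi(a)=1$, solve $\partial_{\pi(a)}[\,g_x(\pi)+\lambda(\sum_a\pi(a)-1)\,]=0$ to get $\log\frac{\pi(a)}{\pi_0(a|x)}=-\frac{\alpha}{1-\alpha}f_r(a,x)-\mathrm{const}$, and fix the constant by normalization; convexity of $g_x$ then certifies that the stationary point is the global minimum.

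The main obstacle I anticipate is not the algebra but the bookkeeping around the domain of the functional: the KL term is finite only when $\pi_\theta(\cdot|x)\ll\pi_0(\cdot|x)$, so I must restrict attention to such $\pi$ and verify that $\pi^\star$ satisfies this (which it does, since $\pi^\star(a|x)>0$ precisely where $\pi_0(a|x)>0$, the exponential factor being strictly positive and finite because $f_r$ is bounded). I should also confirm that the nonnegativity constraints on $\pi$ are inactive at $\pi^\star$, so that the unconstrained stationarity condition is legitimate, and justify interchanging the minimization with $\mathbb{E}_{P_X}$ by observing that the integrand is minimized pointwise with no coupling across contexts. These steps are routine but must be stated to make the Gibbs-measure identity fully rigorous.
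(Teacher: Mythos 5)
Your proof is correct, and in substance it follows the same route the paper gestures at, but it is considerably more complete. The paper's own proof consists of rescaling the objective by $1/\alpha$ so that $\frac{\alpha}{1-\alpha}$ plays the role of an inverse temperature, and then citing Zhang (2006) and Aminian et al.\ (2021) for the conclusion --- i.e., it invokes the Gibbs variational principle without carrying out the argument. You instead prove that principle directly: you make explicit the reduction to a per-context minimization over the simplex (with the needed remark that minimization and $\mathbb{E}_{P_X}$ can be interchanged because there is no coupling across contexts), and you establish optimality via the completing-the-KL identity $g_x(\pi) = (1-\alpha)\, D\bigl(\pi \,\|\, \pi^\star(\cdot|x)\bigr) - (1-\alpha)\log Z(x)$, which in addition yields uniqueness of the minimizer --- something the paper never addresses. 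You also handle two points the paper glosses over: the absolute-continuity requirement for finiteness of the KL term (satisfied by $\pi^\star$ since the exponential tilt of a bounded $f_r$ is strictly positive and finite exactly where $\pi_0$ is positive), and the degenerate endpoint $\alpha=1$, where the stated formula is undefined and the problem becomes linear; the paper implicitly acknowledges this by describing the $\alpha \to 1$ case only as a limit giving a deterministic policy. In short, your argument buys self-containedness and rigor at essentially no extra length, whereas the paper's proof buys brevity by deferring the core step to references.
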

\begin{proof}
The minimization problem \eqref{Eq: true reg minimize} can be written as follows:
\begin{align}
   \min_{\pi_\theta} \mathbb{E}_{P_X}[\mathbb{E}_{\pi_\theta(A|X)}[c(A,X)]] + \lambda \KLr(\pi_\theta(A|X)\|\pi_0(A|X)), \quad \lambda\geq 0.
\end{align}
Using the same approach used by \citet{zhang2006information,aminian2021exact} and considering $\frac{1}{\lambda}$ as the inverse temperature, the final result holds.
\end{proof}
The optimal learning policy under KL divergence regularization, i.e., 
\begin{align}
\pi_\theta^\star(A=a|X=x)=\frac{\pi_0(A=a|X=x)e^{-\frac{1}{\lambda }c(a,x)}}{\mathbb{E}_{\pi_0}[e^{-\frac{1}{\lambda }c(a,x)}]},
\end{align}
provides the following insights:
\begin{itemize}
\item The optimal learning policy, $\pi_\theta^\star(A|X)$, is a stochastic policy.
\item The optimal learning policy is invariant with respect to constant shifts in the cost function.
\item For asymptotic condition, i.e., $\lambda\rightarrow 0$, the optimal learning policy will be deterministic policy.
\end{itemize}

\section{Experiments}\label{App: experiments}

\subsection{Setup Details}
{\blue In our experiments, we use the following image classification datasets, Fashion-MNIST (FMNIST)~\citep{xiao2017/online},  EMNIST \citep{cohen2017emnist}, CIFAR-10 and CIFAR-100~\citep{krizhevsky2009learning}.} We also use KuaiRec dataset as a real-world example, details explained in section~\ref{sec:real_world}. A summary of the statistics of these datasets is provided in \cref{tab:stats}. We use a combination of manual and automatic hyper-parameter tuning for the learning rate values and regularization coefficient $\lambda$. To be more specific, for the deep model we manually test different hyper-parameters for $\tau=1, 5$ and use them to set search intervals for other values of $\tau$ and all values of $\tau$ for the deep model. For automatic search we use optuna library. We train each model by $ 120$ and $60$ epochs for deep and linear models respectively and use a learning rate multiplier of $0.5$ in every $25$ epochs. Inspired by BanditNet experiments in \citet{joachims2018deep}, for the CIFAR-10 dataset, we ignore samples with less than $\nu=0.001$ propensity score, while for the FMNIST dataset after grid search, we consider $\nu=0.001$ as the truncation parameter. Table~\ref{tab:experiment setup} illustrates the experiment settings (Real-world dataset settings are separately in section~\ref{sec:real_world})

\begin{table}[t]{\blue
\caption{Statistics of the datasets used in our experiments.}
\label{tab:stats}
\vskip 0.15in
\begin{center}
\begin{small}
\begin{sc}
\centering
\begin{tabular}{lcccr}
\toprule
Data set & training samples & test samples  & number of actions & Dimension \\
\bottomrule
FMNIST & 60000 & 10000 & 10 & $28 \times 28$\\
EMNIST    & 60000 & 10000 & 10 & $28 \times 28$ \\
CIFAR-10    & 50000 & 10000 & 10 & $32 \times 32 \times 3$ \\
CIFAR100    & 50000 & 10000 & 100 & $32 \times 32 \times 3$ \\
Kuairec    & 12,530,806 & 4,676,570 & 10,728 & $1555$ \\
\bottomrule
\end{tabular}
\end{sc}
\end{small}
\end{center}}
\vskip -0.1in
\end{table}
\subsection{Deep Model Architecture}
We use two simple versions of ResNet architecture. For ResNet-v1 we use a single residual layer in each of the four blocks. For ResNet-v2 we use two residual layers in each of the blocks.
\begin{table}[tbh!]
    \centering
     \caption{Experiment setup details for the softmax policy with deep learning}
    \label{tab:experiment setup}
    \begin{center}
    \begin{tabular}{ccc}
    \toprule
         & Deep model & Linear model \\
         \midrule
       Optimizer  & SGD  & SGD \\
      Truncation parameter $(\nu)$   & $0.001$ & $0.001$ \\
      Network & ResNet-v2 & Linear\\
       Learning rate  & 0.005  & 0.0005 \\
       Max epochs ($M$) & 120 & 60\\
       Batch size & 128 & 128\\
        \bottomrule
    \end{tabular}
        \end{center}
\end{table}
\subsection{Bandit Dataset Generation}
We create a bandit dataset consisting of samples $(x, y, a, p, c)$ where $x$ is the context, $y$ is the true label (optimal action), $a$ is the logging policy's action, $p$ is the propensity score, and $c$ is the feedback (cost) of the action. To do so, starting with a labeled dataset (CIFAR-10, CIFAR-100, EMNIST and FMNIST in our experiments) containing only the pair $(x, y)$ for each sample, we first train a logging policy using the true labels $y$, with fully supervised feedback. For each context $x$ in the labeled dataset, we sample an action $a$ and compute propensity score $p$ from the trained logging policy according to the softmax output of the model, and compute the cost value $c$. Hence the tuple $(x, y, a, p, c)$ is created. In order to decrease the performance of the logging policy in a controlled manner, for each $\tau \in \{1, 5, 10, 20\}$ we train a logging policy with temperature $\tau$. During dataset generation, we sample from the logging policy with temperature $\tau=1$. So the trained logging policy's performance decreases as $\tau$ increases. For each $\rho \in \{1, 0.5, 0.2, 0.1, 0.02\}$, we randomly select $\rho$ proportion of the samples and remove the feedback from other samples. \\
Therefore for each labeled dataset, we create $20 = 4 \times 5$ bandit datasets for different values of $\tau$ and $\rho$. For a fair comparison between different methods, we create and store these datasets once, and apply the models on the same dataset for each setting. \\
{ \blue 
For CIFAR-10, FashionMNIST, and EMNIST datasets in linear model, we flatten the image to get a $3072$, $784$, and $784$ dimensional feature vector respectively. For CIFAR-100 we use ResNet-50 pretrained features. We use this vector as the context $x$.
 \\
 The architecture of the logging policy is ResNet-v1 for CIFAR-10, FMNIST, and EMNIST. For CIFAR-100 we use ResNet-v2.
 \\
 Note that for linear experiments on CIFAR-10 and FMNIST, we trained a deep logging policy. However for EMNIST and CIFAR-100, we used a linear model for the logging policy. Table~\ref{tab:linear_log_policy} shows a summary of features and models in a linear setting. The reason behind the different settings is to observe the difference in performance when the logging policy is of different architectures. We also carried out experiments on CIFAR-10 with linear logging policy, explained in section~\ref{sec:cifar_pretrained}.
 }
\begin{table}[tbh!]{\blue
        \caption{Summary of models and features in linear experiments}
            \label{tab:linear_log_policy}}
\begin{center}
    
    \begin{tabular}{cccc}
    \toprule
         & Logging policy & Trained Policy & Features \\
         \midrule
       FMNIST  & deep  & linear & raw \\
      CIFAR-10  & linear/deep  & linear & raw \\
      EMNIST & linear  & linear & raw \\
       CIFAR-100  & linear  & linear & pre-trained \\
        \bottomrule
    \end{tabular}
    \end{center}

\end{table}
\subsection{Baselines}\label{App: baselines}
We consider two baselines in our experiments for linear and deep setup.

\textbf{Linear Model:} In this setup, as we are focused on truncated IPS estimator, therefore we choose the Bayesian-CRM (B-CRM) method based on \citet[Proposition~1]{london_sandler2019} introduced in \eqref{eq: BCRM}. For B-CRM as our baseline, we estimate $\mu_0$ using logged known-feedback dataset. 
%


\textbf{Deep Model:} In this setup, we consider the BanditNet \cite{joachims2018deep} as baseline. Note that, in BanditNet, instead of an IPS estimator, we have a self-normalized IPS (SNIPS) estimator. In particular, the SNIPS estimator is defined as
\begin{equation}\label{eq: banditnet}
   \text{SNIPS}:= \frac{\sum_{i=1}^n c_i\frac{\pi_\theta(a_i|x_i)}{\pi_0(a_i|x_i)}}{\sum_{i=1}^n \frac{\pi_\theta(a_i|x_i)}{\pi_0(a_i|x_i)}}.
\end{equation}
However, the SNIPS estimator in \eqref{eq: banditnet} can not be optimized by SGD and \cite{joachims2018deep} proposed BanditNet as a constraint optimization version of \eqref{eq: banditnet} which can be optimized by SGD.
\subsection{Results}\label{app: more results}
For CIFAR-10 and FMNIST, in Tables~\ref{tab:comparison main-banditnet-full} and \ref{tab:comparison main-linear-full}, we compare the performance of all proposed algorithms, WCE-S2BL, KL-S2BL, WCE-S2BLK and KL-S2BLK in both deep and linear models with the baselines, BanditNet in deep model and Bayesian CRM in linear model for $\tau\in\{1,5,10,20\} $ and $\rho\in\{0.02,0.1,0.2,0.5,1 \}$. {\blue Similarly, for CIFAR-100 and EMNIST the results are presented in Tables~\ref{tab:comparison main-linear-EM-Cifar-100} and \ref{tab:comparison main-banditnet-full-emn-cifar-100}.} Note that, for $\rho=1$, where we have access to all logged known-feedback dataset, WCE-S2BL and KL-S2BL are the same as WCE-S2BLK and KL-S2BLK, respectively. 

\begin{table*}[htb!]
    \centering
  \caption{Comparison of different algorithms WCE-S2BL, KL-S2BL, WCE-S2BLK, KL-S2BLK and BanditNet deterministic policy accuracy for FMNIST and CIFAR-10 with deep model setup and different qualities of logging policy ($\tau\in\{1,5,10,20\}$) and proportions of labeled data ($\rho\in \{0.02,0.1,0.2,0.5,1\}$). }
  \label{tab:comparison main-banditnet-full}
  \begin{center}
  \resizebox{0.9\textwidth}{!}{
	\begin{tabular}{ccccccccc}
 \toprule
    	Dataset & $\tau$ & $\rho$ & \textbf{WCE-S2BL} & \textbf{KL-S2BL} & \textbf{WCE-S2BLK} & \textbf{KL-S2BLK} & \textbf{BanditNet} & \textbf{Logging Policy}\\
    	\midrule
    	\multirow{4}{*}{FMNIST} & \multirow{2}{*}{1} & 0.02 & \bm{$93.12\pm 0.16$} & $91.79\pm 0.16$ & $78.66\pm 0.90$ & $61.46\pm 9.97$ & $78.64\pm 1.97$ & \multirow{5}{*}{$91.73$}\\
    	 &  & 0.1 & \bm{$93.26\pm 0.05$} & $91.73\pm 0.08$ & $85.83\pm 0.85$ & $77.75\pm 9.10$ & $84.64\pm 4.24$ & \\
      &  & 0.2 & \bm{$93.16\pm 0.18$} & $92.04\pm 0.13$ & $82.76\pm 4.45$ & $87.72\pm 0.53$ & $89.60\pm 0.49$ & \\
      &  & 0.5 & \bm{$93.19\pm 0.21$} & $91.94\pm 0.04$ & $88.72\pm 0.37$ & $86.30\pm 1.43$ & $91.59\pm 0.03$ & \\
      &  & 1 & $93.10\pm 0.15$ & $92.48\pm 0.6$ & $-$ & $-$ & \bm{$93.54\pm 0.03$} & \\
      \cmidrule(lr{0.5em}){2-9}
      & \multirow{2}{*}{5} & 0.02 & \bm{$90.99\pm 0.09$} & $83.54\pm 0.66$ & $81.67\pm 0.36$ & $34.27\pm 27.64$ & $47.11\pm 12.51$ & \multirow{5}{*}{$53.97$}\\
    	 &  & 0.1 & \bm{$90.79\pm 0.14$} & $81.65\pm 0.02$ & $87.93\pm 0.07$ & $73.48\pm 13.26$ & $86.73\pm 0.63$ & \\
      &  & 0.2 & \bm{$91.43\pm 0.07$} & $82.71\pm 0.59$ & $89.47\pm 0.06$ & $88.94\pm 0.34$ & $89.17\pm 0.26$ & \\
      &  & 0.5 & \bm{$91.74\pm 0.04$} & $88.36\pm 0.15$ & $89.18\pm 0.47$ & $90.45\pm 0.12$ & $90.42\pm 0.56$ & \\
      &  & 1 & $91.41\pm 0.16$ & $92.42\pm 0.12$ & $-$ & $-$ & \bm{$92.65\pm 0.04$} & \\
      \cmidrule(lr{0.5em}){2-9}
      & \multirow{2}{*}{10} & 0.02 & \bm{$89.35\pm 0.15$} & $69.94\pm 0.60$ & $77.82\pm 0.73$ & $45.18\pm 19.82$ & $23.52\pm 3.15$ & \multirow{5}{*}{$20.72$}\\
    	 &  & 0.1 & \bm{$89.31\pm 0.16$} & $80.68\pm 0.46$ & $85.55\pm 0.39$ & $80.54\pm 6.88$ & $82.96\pm 3.03$ & \\
      &  & 0.2 & \bm{$89.47\pm 0.3$} & $79.45\pm 0.75$ & $88.31\pm 0.14$ & $67.53\pm 2.06$ & $88.35\pm 0.45$ & \\
      &  & 0.5 & $90.05\pm 0.13$ & $89.38\pm 0.13$ & $89.81\pm 0.23$ & $89.63\pm 0.98$ & \bm{$90.44\pm 0.08$} & \\
      &  & 1 & $91.00\pm 0.19$ & $91.45\pm 0.17$ & $-$ & $-$ & \bm{$92.21\pm 0.07$} & \\
      \cmidrule(lr{0.5em}){2-9}
    	 & \multirow{2}{*}{20} & 0.02 & \bm{$52.60\pm 1.36$} & $45.20\pm 4.74$ & $41.69\pm 2.19$ & $22.23\pm 3.30$ & $44.04\pm 7.50$ & \multirow{5}{*}{$10.54$} \\
    	 & & 0.1 & \bm{$77.52\pm 0.63$} & $76.89\pm 0.39$ & $76.29\pm 0.48$ & $71.28\pm 5.14$ & $75.64\pm 1.65$ & \\
      &  & 0.2 & \bm{$84.02\pm 0.28$} & $82.25\pm 0.60$ & $82.85\pm 0.94$ & $82.69\pm 0.41$ & $80.63\pm 0.13$ & \\
      &  & 0.5 & $86.83\pm 0.22$ & $87.48\pm 0.26$ & $86.51\pm 0.32$ & $87.77\pm 0.17$ & \bm{$87.61\pm 0.10$} & \\
      &  & 1 & $87.05\pm 0.01$ & $89.11\pm 0.10$ & $-$ & $-$ & \bm{$89.03\pm 0.16$} & \\
      \midrule
     \multirow{4}{*}{CIFAR-10} &  \multirow{2}{*}{1} & 0.02 & \bm{$85.01\pm 0.37$} & $84.6\pm 0.65$ & $17.12\pm 0.97$ & $21.63\pm 1.44$ & $27.39\pm 3.47$ & \multirow{5}{*}{$79.77$} \\
    	 & & 0.1 & \bm{$83.03\pm 1.49$} & $84.34\pm 0.11$ & $51.84\pm 0.92$ & $46.24\pm 0.41$ & $52.78\pm 0.56$ &  \\
      & & 0.2 & \bm{$85.06\pm 0.32$} & $85.53\pm 0.56$ & $58.04\pm 5.47$ & $54.12\pm 0.51$ & $67.96\pm 0.62$ &  \\
      & & 0.5 & \bm{$84.79\pm 0.4$} & $84.5\pm 0.09$ & $79.23\pm 0.30$ & $78.74\pm 0.56$ & $71.36\pm 1.91$ & \\
      & & 1 & $84.63\pm 0.38$ & $84.25\pm 0.45$ & $-$ & $-$ & \bm{$86.82\pm 0.87$} & \\
       \cmidrule(lr{0.5em}){2-9}
    	 & \multirow{2}{*}{5} & 0.02 & \bm{$73.57\pm 0.36$} & $51.14\pm 2.25$ & $17.12\pm 0.97$ & $21.63\pm 1.44$ & $15.81\pm 5.12$ & \multirow{5}{*}{$53.97$} \\
    	 & & 0.1 & \bm{$74.13\pm 1.43$} & $58.31\pm 0.52$ & $54.75\pm 0.39$ & $33.21\pm 0.88$ & $24.68\pm 3.74$ & \\
      & & 0.2 & \bm{$76.96\pm 0.35$} & $63.19\pm 0.51$ & $62.98\pm 0.81$ & $46.35\pm 0.10$ & $30.03\pm 12.75$ & \\
      & & 0.5 & \bm{$76.46\pm 0.7$} & $64.24\pm 2.09$ & $70.50\pm 0.86$ & $55.92\pm 0.66$ & $58.34\pm 8.69$ & \\
      & & 1 & \bm{$77.53\pm 1.19$} & $69.53\pm 1.09$ & $-$ & $-$ & $70.12\pm 6.89$ & \\
       \cmidrule(lr{0.5em}){2-9}
    	 & \multirow{2}{*}{10} & 0.02 & \bm{$65.67\pm 1.06$} & $37.8\pm 0.85$ & $32.61\pm 1.14$ & $20.66\pm 5.74$ & $13.78\pm 1.99$ & \multirow{5}{*}{$43.45$} \\
    	 & & 0.1 & \bm{$69.81\pm 0.87$} & $43.00\pm 0.73$ & $51.15\pm 0.64$ & $35.87\pm 1.11$ & $21.19\pm 3.35$ & \\
      & & 0.2 & \bm{$69.4\pm 0.47$} & $48.44\pm 0.26$ & $55.38\pm 3.63$ & $44.60\pm 0.19$ & $50.38\pm 0.55$ & \\
      & & 0.5 & \bm{$75.08\pm 0.18$} & $64.39\pm 0.05$ & $71.90\pm 0.14$ & $16.19\pm 0.99$ & $68.92\pm 0.68$ & \\
      & & 1 & $75.58\pm 0.29$ & \bm{$79.82\pm 0.36$} & $-$ & $-$ & $78.8\pm 0.53$ & \\
       \cmidrule(lr{0.5em}){2-9}
    	 & \multirow{2}{*}{20} & 0.02 & \bm{$26.24\pm 1.42$} & $15.09\pm 1.5$ & $16.46\pm 1.77$ & $12.56\pm 2.01$ & $13.25\pm 1.36$ & \multirow{5}{*}{$20.72$} \\
    	 & & 0.1 & \bm{$33.61\pm 0.58$} & $30.67\pm 1.35$ & $27.38\pm 2.44$ & $27.74\pm 8.23$ & $21.12\pm 1.01$ &  \\
      & & 0.2 & $34.49\pm 4.01$ & \bm{$36.95\pm 0.77$} & $32.91\pm 6.95$ & $34.27\pm 2.55$ & $32.69\pm 2.17$ & \\
      & & 0.5 & $46.95\pm 0.89$ & \bm{$50.12\pm 4.43$} & $47.69\pm 0.63$ & $41.45\pm 9.93$ & $36.79\pm 2.78$ & \\
      & & 1 & $47.68\pm 3.03$ & \bm{$64.34\pm 0.85$} & $-$ & $-$ & $55.27\pm 3.39$ & \\
      \bottomrule
  \end{tabular}}
   \end{center}
\end{table*}

\begin{table*}[htp!]

  \caption{Comparison of different algorithms WCE-S2BL, KL-S2BL, WCE-S2BLK, KL-S2BLK and BanditNet deterministic policy accuracy for EMNIST and CIFAR-100 with deep model setup and different qualities of logging policy ($\tau\in\{10,20\}$ for EMNIST and $\tau\in\{1, 5, 10\}$ for CIFAR-100) and proportions of labeled data ($\rho\in \{0.02,0.1,0.2,0.5,1\}$). }
  \label{tab:comparison main-banditnet-full-emn-cifar-100}
  \centering
  \resizebox{0.9\linewidth}{!}{
	\begin{tabular}{ccccccccc}
	\\
 \toprule
    	Dataset & $\tau$ & $\rho$ & \textbf{WCE-S2BL} & \textbf{KL-S2BL} & \textbf{WCE-S2BLK} & \textbf{KL-S2BLK} & \textbf{BanditNet} & \textbf{Logging Policy}\\
    	\midrule
 \multirow{2}{*}{EMNIST} & \multirow{2}{*}{10} & 0.02 & \bm{$98.77\pm0.06$} & $93.76\pm0.46$ & $93.25\pm0.65$ & $67.46\pm40.63$ & $95.44\pm0.12$ & \multirow{5}{*}{$51.26$} \\
 &  & 0.1 & \bm{$98.75\pm0.01$} & $98.14\pm0.15$ & $96.61\pm0.25$ & $98.60\pm0.10$ & $98.46\pm0.01$ &  \\
 &  & 0.2 & $98.81\pm0.02$ & $98.49\pm0.01$ & $98.13\pm0.06$ & $98.66\pm0.04$ & \bm{$99.11\pm0.01$} &  \\
 &  & 0.5 & $99.16\pm0.04$ & $99.03\pm0.00$ & $99.17\pm0.01$ & $99.09\pm0.02$ & \bm{$99.25\pm0.07$} &  \\
 &  & 1 & $99.38\pm0.05$ & $99.39\pm0.02$ & $-$ & $-$ & \bm{$99.46\pm0.02$} &  \\
\cmidrule(lr{0.5em}){2-9} & \multirow{2}{*}{20} & 0.02 & \bm{$96.54\pm0.06$}
& $84.98\pm3.04
$ & $79.49\pm1.87
$ & $93.47\pm0.50
$ & $89.50\pm5.04
$ & \multirow{5}{*}{$25.58$} \\
 &  & 0.1 & $97.79\pm0.14$ & $97.83\pm0.02$ & $97.88\pm0.11$ & \bm{$98.31.0.04$} & $98.14\pm0.12$ &  \\
 &  & 0.2 & $98.49\pm0.04$ & $98.33\pm0.02$ & $98.50\pm0.05$ & $98.42\pm0.07$ & \bm{$98.59\pm0.05$} &  \\
 &  & 0.5 & $98.83\pm0.08$ & $98.81\pm0.03$ & $98.79\pm0.05$ & $98.83\pm0.11$ & \bm{$99.07\pm0.04$} &  \\
 &  & 1 & $99.08\pm0.03$ & \bm{$99.35\pm0.03$} & $-$ & $-$ & $99.16\pm0.02$ &  \\
      \midrule
\multirow{3}{*}{CIFAR-100} & \multirow{2}{*}{1} & 0.02 & \bm{$38.60\pm0.23$} & $15.67\pm2.06$ & $5.58\pm1.06$ & $1.76\pm0.54$ & $1.40\pm0.29$ & \multirow{5}{*}{$26.48$} \\
 &  & 0.1 & \bm{$39.17\pm0.65$} & $17.02\pm1.20$ & $17.04\pm1.50$ & $16.21\pm0.52$ & $1.48\pm0.23$ &  \\
 &  & 0.2 & \bm{$41.02\pm0.54$} & $18.36\pm0.56$ & $11.96\pm0.45$ & $22.18\pm0.31$ & $1.28\pm0.21$ &  \\
 &  & 0.5 & \bm{$41.42\pm3.71$} & $39.79\pm0.24$ & $32.13\pm3.68$ & $26.56\pm3.46$ & $1.91\pm0.49$ &  \\
 &  & 1 & \bm{$42.93\pm0.40$} & $35.59\pm1.79$ & $$ & $$ & $2.09\pm0.28$ &  \\
\cmidrule(lr{0.5em}){2-9} & \multirow{2}{*}{5} & 0.02 & \bm{$11.61\pm0.83$} & $3.04\pm1.07$ & $1.51\pm0.50$ & $1.0\pm0.00$ & $1.14\pm0.20$ & \multirow{5}{*}{$4.58$} \\
 &  & 0.1 & \bm{$18.73\pm0.78$} & $4.76\pm0.35$ & $5.13\pm0.15$ & $1.46\pm0.29$ & $1.19\pm0.27$ &  \\
 &  & 0.2 & \bm{$17.71\pm0.07$} & $4.30\pm1.15$ & $9.71\pm0.79$ & $1.26\pm0.38$ & $1.45\pm0.12$ &  \\
 &  & 0.5 & \bm{$19.13\pm0.44$} & $4.04\pm0.17$ & $15.64\pm0.49$ & $2.34\pm0.30$ & $1.32\pm0,29$ &  \\
 &  & 1 & \bm{$19.71\pm0.08$} & $3.27\pm1.15$ & $-$ & $-$ & $1.5\pm0.26$ &  \\
\cmidrule(lr{0.5em}){2-9} & \multirow{2}{*}{10} & 0.02 & \bm{$6.57\pm0.64$} & $1.50\pm0.25$ & $1.25\pm0.18$ & $1.0\pm0.00$ & $1.04\pm0.06$ & \multirow{5}{*}{$1.73$} \\
 &  & 0.1 & \bm{$5.59\pm0.48$} & $1.75\pm0.28$ & $1.97\pm0.27$ & $1.22\pm0.13$ & $1.26\pm0.15$ &  \\
 &  & 0.2 & \bm{$8.9\pm0.32$} & $1.86\pm0.26$ & $3.05\pm0.10$ & $1.52\pm0.16$ & $1.48\pm0.24$ &  \\
 &  & 0.5 & \bm{$8.21\pm0.12$} & $1.96\pm0.24$ & $6.28\pm0.52$ & $1.39\pm0.21$ & $1.36\pm0.17$ &  \\
 &  & 1 & \bm{$9.22\pm0.21$} & $1.85\pm0.23$ & $$ & $$ & $1.39\pm0.30$ &  \\
      \bottomrule
  \end{tabular}}
\end{table*}

\begin{table*}[htb]
  \caption{Comparison of different algorithms WCE-S2BL, KL-S2BL, WCE-S2BLK, KL-S2BLK and Bayesian-CRM (B-CRM) deterministic policy accuracy for FMNIST and CIFAR-10 with linear model setup and different qualities of logging policy ($\tau\in \{1,5,10,20\}$) and proportions of labeled data ($\rho\in\{0.02,0.1, 0.2,0.5,1\}$). }
  \label{tab:comparison main-linear-full}
  \centering
  \resizebox{\linewidth}{!}{
	\begin{tabular}{ccccccccc}
	\\
 \toprule
    	Dataset & $\tau$ & $\rho$ & \textbf{WCE-S2BL} & \textbf{KL-S2BL} & \textbf{WCE-S2BLK} & \textbf{KL-S2BLK} & \textbf{B-CRM} & \textbf{Logging Policy}\\
    	\midrule
    	\multirow{4}{*}{FMNIST} & \multirow{2}{*}{1} & 0.02 & $84.37\pm 0.14$ & $71.67\pm 0.26$ & $78.84\pm 0.05$ & $74.71\pm 0.06$ & $64.67\pm 1.44$
 & \multirow{5}{*}{\bm{$91.73$}} \\
    	 &  & 0.1 & $84.18\pm 0.00$ & $75.43\pm 0.04$ & $82.35\pm 0.05$ & $72.45\pm 0.01$ & $70.38\pm 0.09$ &  \\
      &  & 0.2 & $83.59\pm 0.18$ & $71.88\pm 0.31$ & $83.05\pm 0.06$ & $74.06\pm 0.00$ & $70.99\pm 0.32$ & \\      &  & 0.5 & $84.14\pm 0.20$ & $71.03\pm 0.13$ & $83.85\pm 0.00$ & $71.05\pm 1.79$ & $71.76\pm 0.03$
 & \\
      &  & 1 & $84.24\pm 0.07$ & $69.44\pm 1.20$ & $-$ & $-$ & $72.42\pm 0.01$
 & \\
      \cmidrule(lr{0.5em}){2-9}
      & \multirow{2}{*}{5} & 0.02 & \bm{$83.51\pm 0.01$} & $19.60\pm 0.42$ & $75.24\pm 2.89$ & $19.48\pm 0.33$ & $64.49\pm 01.04$
 & \multirow{5}{*}{$53.97$} \\
    	 &  & 0.1 & \bm{$83.99\pm 0.02$} & $36.33\pm 11.60$ & $80.11\pm 0.09$ & $29.55\pm 3.72$ & $70.21\pm 0.07$ &  \\
      &  & 0.2 & \bm{$83.91\pm 0.07$} & $54.83\pm 1.68$ & $82.69\pm 0.19$ & $51.02\pm 9.33$ & $71.14\pm 0.10$
 & \\ &  & 0.5 & \bm{$83.91\pm 0.01$} & $59.49\pm 0.61$ & $83.47\pm 0.02$ & $72.13\pm 0.24$ & $71.86\pm 0.14$
& \\
      &  & 1 & \bm{$83.62\pm 0.01$} & $73.11\pm 0.60$ & $-$ & $-$ & $72.33\pm 0.06$ & \\
      \cmidrule(lr{0.5em}){2-9}
      & \multirow{2}{*}{10} & 0.02 & \bm{$82.31\pm 0.07$} & $26.71\pm 2.18$ & $77.43\pm 0.13$ & $18.35\pm 7.06$ & $66.24\pm 00.03$
 & \multirow{5}{*}{$20.72$} \\
    	 &  & 0.1 & \bm{$82.30\pm 0.04$} & $56.51\pm 9.65$ & $77.59\pm 0.34$ & $47.93\pm 5.15$ & $70.33\pm 0.33$  & \\
      &  & 0.2 & \bm{$83.15\pm 0.09$} & $67.10\pm 5.17$ & $81.20\pm 0.12$ & $60.26\pm 0.88$ & $71.02\pm 0.30$ &  \\
      &  & 0.5 & \bm{$83.27\pm 0.01$} & $74.97\pm 0.17$ & $82.85\pm 0.10$ & $70.02\pm 1.41$ & $71.72\pm 0.01$ & \\
      &  & 1 & \bm{$83.00\pm 0.09$} & $73.92\pm 0.27$ & $-$ & $-$ & $72.25\pm 0.10$ & \\
      \cmidrule(lr{0.5em}){2-9}
    	 & \multirow{2}{*}{20} & 0.02 & $47.44\pm 2.83$ & $32.24\pm 4.95$ & $51.21\pm 2.52$ & $21.66\pm 1.89$ & \bm{$63.99\pm 1.01$}
 & \multirow{5}{*}{$10.54$} \\
    	 & & 0.1 & \bm{$75.10\pm 0.09$} & $69.22\pm 4.09$ & $75.02\pm 0.04$ & $59.04\pm 0.59$ & $68.43\pm 0.33$ & \\
      &  & 0.2 & $77.19\pm 0.02$ & $74.43\pm 0.78$ & \bm{$77.36\pm 0.02$} & $73.36\pm 1.51$ & $69.21\pm 0.24$ & \\      &  & 0.5 & $73.89\pm 0.00$ & \bm{$79.04\pm 0.17$} & $77.5\pm 0.17$ & $78.92\pm 0.04$ & $71.17\pm 0.05$ & \\
      &  & 1 & \bm{$78.51\pm 0.01$} & $74.36\pm 0.01$ & $-$ & $-$ & $71.74\pm 0.16$ & \\
      \midrule
\multirow{4}{*}{CIFAR-10} & \multirow{2}{*}{1} & 0.02 & \bm{$62.95\pm0.08$} & $28.29\pm11.35$ & $9.49\pm0.72$ & $10.02\pm0.02$ & $55.02\pm0.14$ & \multirow{5}{*}{$52.89$} \\
 &  & 0.1 & \bm{$62.97\pm0.27$} & $21.79\pm14.50$ & $60.89\pm0.13$ & $10.00\pm0.00$ & $56.59\pm0.26$ &  \\
 &  & 0.2 & \bm{$62.83\pm0.06$} & $26.29\pm6.92$ & $62.90\pm0.10$ & $14.08\pm1.58$ & $57.75\pm0.42$ &  \\
 &  & 0.5 & \bm{$63.49\pm0.07$} & $46.15\pm1.16$ & $62.89\pm0.07$ & $43.25\pm1.21$ & $58.81\pm0.03$ &  \\
 &  & 1 & \bm{$63.85\pm0.09$} & $44.07\pm0.79$ & $-$ & $-$ & $59.24\pm0.05$ &  \\
\cmidrule(lr{0.5em}){2-9} & \multirow{2}{*}{5} & 0.02 & \bm{$56.84\pm0.07$} & $15.03\pm6.04$ & $51.40\pm0.29$ & $13.74\pm2.47$ & $44.48\pm1.02$ & \multirow{5}{*}{$40.96$} \\
 &  & 0.1 & \bm{$57.47\pm0.28$} & $17.69\pm1.21$ & $55.85\pm0.19$ & $9.96\pm0.06$ & $50.88\pm0.32$ &  \\
 &  & 0.2 & \bm{$58.50\pm0.02$} & $18.74\pm1.58$ & $58.22\pm0.05$ & $13.26\pm3.54$ & $52.56\pm0.21$ &  \\
 &  & 0.5 & \bm{$59.47\pm0.09$} & $23.60\pm2.68$ & $60.03\pm0.03$ & $18.06\pm2.84$ & $53.44\pm0.28$ &  \\
 &  & 1 & \bm{$60.97\pm0.01$} & $34.35\pm2.59$ & $-$ & $-$ & $54.13\pm0.09$ &  \\
\cmidrule(lr{0.5em}){2-9} & \multirow{2}{*}{10} & 0.02 & \bm{$54.47\pm1.34$} & $11.60\pm1.13$ & $41.93\pm1.25$ & $10.08\pm0.12$ & $44.66\pm0.29$ & \multirow{5}{*}{$36.6$} \\
 &  & 0.1 & \bm{$55.47\pm0.29$} & $19.19\pm0.19$ & $54.84\pm0.02$ & $10.00\pm0.00$ & $50.76\pm0.25$ &  \\
 &  & 0.2 & \bm{$56.99\pm0.00$} & $22.83\pm0.46$ & $56.94\pm0.19$ & $13.69\pm2.69$ & $52.09\pm0.43$ &  \\
 &  & 0.5 & $60.27\pm0.08$ & $30.11\pm3.22$ & \bm{$60.77\pm0.00$} & $24.60\pm2.35$ & $53.19\pm0.42$ &  \\
 &  & 1 & \bm{$61.14\pm0.04$} & $40.54\pm0.48$ & $-$ & $-$ & $53.75\pm0.14$ &  \\
\cmidrule(lr{0.5em}){2-9} & \multirow{2}{*}{20} & 0.02 & \bm{$56.33\pm0.16
$} & $13.92\pm5.55
$ & $46.27\pm2.51
$ & $10.00\pm0.00
$ & $45.11\pm0.82
$ & \multirow{5}{*}{$41.63$} \\
 &  & 0.1 & \bm{$57.23\pm0.00$} & $20.79\pm0.03$ & $56.43\pm0.18$ & $13.92\pm0.52$ & $50.69\pm0.43$ &  \\
 &  & 0.2 & $57.87\pm0.11$ & $16.3\pm4.20$ & \bm{$57.90\pm0.27$} & $11.73\pm1.90$ & $51.88\pm0.22$ &  \\
 &  & 0.5 & $59.05\pm0.14$ & $24.16\pm0.67$ & \bm{$59.10\pm0.30$} & $19.23\pm0.37$ & $53.08\pm0.14$ &  \\
 &  & 1 & \bm{$61.76\pm0.16$} & $33.98\pm0.88$ & $-$ & $-$ & $53.51\pm0.16$ &  \\
      \bottomrule     
  \end{tabular}}
\end{table*}

\begin{table*}[htb]
{\blue
  \caption{Comparison of different algorithms WCE-S2BL, KL-S2BL, WCE-S2BLK, KL-S2BLK and Bayesian-CRM (B-CRM) deterministic policy accuracy for EMNIST and CIFAR-100 with linear model setup and different qualities of logging policy ($\tau\in \{1,5,10,20\}$) and proportions of labeled data ($\rho\in\{0.02,0.1, 0.2,0.5,1\}$). }
  \label{tab:comparison main-linear-EM-Cifar-100}
  \centering
  \resizebox{\linewidth}{!}{
	\begin{tabular}{ccccccccc}
	\\
 \toprule
    	Dataset & $\tau$ & $\rho$ & \textbf{WCE-S2BL} & \textbf{KL-S2BL} & \textbf{WCE-S2BLK} & \textbf{KL-S2BLK} & \textbf{B-CRM} & \textbf{Logging Policy}\\
    	\midrule     
\multirow{4}{*}{EMNIST} & \multirow{2}{*}{1} & 0.02 & \bm{$87.00\pm0.01$} & $77.18\pm0.37$ & $86.10\pm0.06$ & $52.52\pm0.68$ & $76.91\pm0.12$ & \multirow{5}{*}{$76.55$} \\
 &  & 0.1 & \bm{$87.52\pm0.00$} & $69.79\pm0.56$ & $86.92\pm0.07$ & $52.80\pm1.65$ & $80.84\pm0.07$ &  \\
 &  & 0.2 & \bm{$87.60\pm0.01$} & $79.83\pm0.50$ & $87.46\pm0.04$ & $76.11\pm0.69$ & $81.61\pm0.08$ &  \\
 &  & 0.5 & $87.69\pm0.04$ & $76.52\pm0.42$ & \bm{$87.71\pm0.03$} & $77.79\pm0.30$ & $82.02\pm0.09$ &  \\
 &  & 1 & \bm{$87.68\pm0.02$} & $80.83\pm0.73$ & $-$ & $-$ & $82.57\pm0.01$ &  \\
\cmidrule(lr{0.5em}){2-9} & \multirow{2}{*}{5} & 0.02 & \bm{$74.14\pm0.02$} & $33.86\pm0.38$ & $70.68\pm0.03$ & $15.14\pm4.23$ & $56.13\pm0.42$ & \multirow{5}{*}{$41.06$} \\
 &  & 0.1 & \bm{$82.10\pm2.21$} & $59.92\pm0.57$ & $62.42\pm0.33$ & $49.00\pm1.58$ & $62.79\pm0.20$ &  \\
 &  & 0.2 & \bm{$82.21\pm2.60$} & $69.39\pm0.37$ & $77.55\pm4.55$ & $51.28\pm6.94$ & $68.21\pm0.22$ &  \\
 &  & 0.5 & \bm{$84.91\pm2.87$} & $85.22\pm0.13$ & $76.38\pm3.00$ & $68.51\pm6.12$ & $73.12\pm0.17$ &  \\
 &  & 1 & $80.03\pm2.03$ & \bm{$86.81\pm0.05$} & $-$ & $-$ & $75.38\pm0.25$ &  \\
\cmidrule(lr{0.5em}){2-9} & \multirow{2}{*}{10} & 0.02 & \bm{$82.91\pm0.01$} & $33.54\pm1.24$ & $80.08\pm0.04$ & $9.67\pm0.38$ & $55.89\pm0.05$ & \multirow{5}{*}{$31.86$} \\
 &  & 0.1 & \bm{$82.95\pm0.03$} & $55.02\pm0.79$ & $82.40\pm0.01$ & $35.56\pm0.97$ & $65.94\pm0.33$ &  \\
 &  & 0.2 & $83.9\pm3.19$ & \bm{$84.27\pm0.07$} & $79.15\pm0.04$ & $83.16\pm0.31$ & $69.70\pm0.17$ &  \\
 &  & 0.5 & \bm{$88.01\pm0.15$} & $86.42\pm0.04$ & $85.28\pm2.68$ & $86.40\pm0.02$ & $73.43\pm0.20$ &  \\
 &  & 1 & \bm{$88.98\pm0.35$ }& $86.77\pm0.01$ & $-$ & $-$ & $75.18\pm0.19$ &  \\
\cmidrule(lr{0.5em}){2-9} & \multirow{2}{*}{20} & 0.02 & \bm{$82.17\pm0.04
$} & $23.34\pm0.40
$ & $78.25\pm0.16
$ & $22.71\pm2.07
$ & $54.02\pm0.93
$ & \multirow{5}{*}{$23.83$} \\
 &  & 0.1 & \bm{$87.72\pm0.14$} & $63.02\pm2.19$ & $86.89\pm0.01$ & $56.22\pm2.29$ & $67.20\pm0.46$ &  \\
 &  & 0.2 & \bm{$88.66\pm0.06$} & $82.93\pm0.25$ & $84.06\pm0.05$ & $82.21\pm0.32$ & $70.70\pm0.10$ &  \\
 &  & 0.5 & $89.66\pm0.09$ & $84.76\pm0.14$ & \bm{$89.78\pm0.05$} & $84.18\pm0.03$ & $73.94\pm0.12$ &  \\
 &  & 1 & \bm{$89.37\pm0.17$} & $80.00\pm0.10$ & $-$ & $-$ & $76.08\pm0.05$ &  \\
\midrule
\multirow{4}{*}{CIFAR-100} & \multirow{2}{*}{1} & 0.02 & \bm{$13.59\pm0.08$} & $6.81\pm2.94$ & $11.92\pm0.24$ & $2.59\pm2.24$ & $4.23\pm0.26$ & \multirow{5}{*}{$12.32$} \\
 &  & 0.1 & \bm{$13.65\pm0.05$} & $7.53\pm0.35$ & $12.60\pm0.06$ & $4.73\pm0.51$ & $8.29\pm0.05$ &  \\
 &  & 0.2 & \bm{$13.73\pm0.02$} & $9.07\pm0.53$ & $13.48\pm0.02$ & $5.73\pm0.27$ & $9.45\pm0.13$ &  \\
 &  & 0.5 & \bm{$13.70\pm0.02$} & $11.46\pm0.33$ & $13.56\pm0.10$ & $9.89\pm0.43$ & $10.94\pm0.03$ &  \\
 &  & 1 & \bm{$13.75\pm0.06$} & $11.99\pm0.99$ & $-$ & $-$ & $12.32\pm0.03$ &  \\
\cmidrule(lr{0.5em}){2-9} & \multirow{2}{*}{5} 
& 0.02 & \bm{$16.38\pm0.04$} & $1.85\pm1.21$ & $2.18\pm0.32$ & $1.0\pm0.00$ & $3.56\pm0.01$ & \multirow{5}{*}{$6.01$} \\
 &  & 0.1 & \bm{$16.14\pm0.05$} & $2.83\pm1.39$ & $15.30\pm0.07$ & $1.06\pm0.08$ & $6.68\pm0.09$ &  \\
 &  & 0.2 & \bm{$16.63\pm0.04$} & $4.67\pm0.70$ & $16.13\pm0.10$ & $1.90\pm0.60$ & $8.46\pm0.23$ &  \\
 &  & 0.5 & \bm{$16.62\pm0.04$} & $9.57\pm0.48$ & $16.49\pm0.06$ & $4.90\pm0.34$ & $9.83\pm0.09$ &  \\
 &  & 1 & \bm{$16.90\pm0.01$} & $12.24\pm0.29$ & $-$ & $-$ & $10.86\pm0.07$ &  \\
\cmidrule(lr{0.5em}){2-9} & \multirow{2}{*}{10} 
& 0.02 & \bm{$15.43\pm0.30$} & $1.25\pm0.36$ & $1.17\pm0.27$ & $1.0\pm0.00$ & $2.99\pm0.15$ & \multirow{5}{*}{$3.4$} \\
 &  & 0.1 & \bm{$15.90\pm0.11$} & $2.47\pm0.67$ & $10.81\pm0.03$ & $0.99\pm0.01$ & $6.01\pm0.11$ &  \\
 &  & 0.2 & \bm{$16.21\pm0.09$} & $2.84\pm0.18$ & $14.53\pm0.08$ & $1.44\pm0.20$ & $7.62\pm0.09$ &  \\
 &  & 0.5 & \bm{$16.71\pm0.06$} & $6.17\pm0.60$ & $16.80\pm0.04$ & $3.10\pm0.56$ & $8.77\pm0.06$ &  \\
 &  & 1 & \bm{$16.87\pm0.08$} & $6.04\pm1.18$ & $-$ & $-$ & $10.25\pm0.09$ &  \\
\cmidrule(lr{0.5em}){2-9} & \multirow{2}{*}{20} & 0.02 & \bm{$18.65\pm0.06
$} & $1.00\pm0.00
$ & $1.36\pm0.27
$ & $1.17\pm0.16
$ & $4.16\pm0.14
$ & \multirow{5}{*}{$3.22$} \\
 &  & 0.1 & \bm{$18.29\pm0.01$} & $1.60\pm0.42$ & $12.43\pm0.09$ & $1.07\pm0.07$ & $7.46\pm0.05$ &  \\
 &  & 0.2 & \bm{$16.99\pm0.10$} & $2.90\pm0.34$ & $5.31\pm0.23$ & $1.06\pm0.09$ & $8.80\pm0.09$ &  \\
 &  & 0.5 & \bm{$19.43\pm0.02$} & $4.38\pm0.21$ & $19.22\pm0.05$ & $2.19\pm0.70$ & $10.23\pm0.15$ &  \\
 &  & 1 & \bm{$20.36\pm0.11$} & $5.96\pm1.01$ & $-$ & $-$ & $11.43\pm0.01$ &  \\
      \bottomrule
  \end{tabular}}}
\end{table*}

\subsection{Propensity Score Truncation}
For improvement in regularization with KL divergence in the scenarios where the propensity scores in the logged missing-feedback dataset are zero, we use the propensity score truncation in \eqref{Eq: CRM regularize by KL} as follows:
\begin{align}\label{eq: kl final estim}
    &\hat{L}_{\mathrm{KL}}^{\nu}(\pi_\theta)\triangleq  \sum_{i=1}^k \frac{1}{m_{a_i}}\sum_{(x,a_i,p)\in S_{u}\cup S} \pi_\theta(a_i|x)\log\left(\pi_\theta(a_i|x)\right)-\pi_\theta(a_i|x)\log(\max(\nu,p)),
\end{align}
where $\nu \in [0,1]$ is the same truncation parameter for truncated IPS estimator in \eqref{Eq: truncated emp risk}. Note that in a case of $p_i=0$ for some sample $(x_i,a_i,p_i)\in S_{u}$ then we have $\hat{L}_{\mathrm{KL}}=-\infty$; hence considering $\nu$ in $\hat{L}_{\mathrm{KL}}^{\nu}$ will help to solve these cases.
\subsection{CIFAR-10 with pre-trained features} \label{sec:cifar_pretrained}
{\blue The linear experiments for CIFAR-10, Table~\ref{tab:comparison linear}, are trained based on a linear model for the logging policy and the learning policy, using pre-trained features as image representation. It is interesting to investigate the performance of a linear learning model on a deep logging policy.

\begin{table*}[htb]
  \caption{Comparison of different algorithms WCE-S2BL, KL-S2BL, WCE-S2BLK, KL-S2BLK and Bayesian-CRM (B-CRM) deterministic policy accuracy for  CIFAR-10 with linear model setup, deep logging policy and different qualities of logging policy ($\tau\in \{1,5,10,20\}$) and proportions of labeled data ($\rho\in\{0.02,0.1, 0.2,0.5,1\}$). The top and second-best performances are indicated by bold text and underlined bold text, respectively.}
  \label{tab:comparison Cifar-10-linear-full}
  \centering
  \resizebox{\linewidth}{!}{
	\begin{tabular}{ccccccccc}
	\\
 \toprule
    	Dataset & $\tau$ & $\rho$ & \textbf{WCE-S2BL} & \textbf{KL-S2BL} & \textbf{WCE-S2BLK} & \textbf{KL-S2BLK} & \textbf{B-CRM} & \textbf{Logging Policy}\\
    	\midrule
    \multirow{4}{*}{CIFAR-10} & \multirow{2}{*}{1} 
    & 0.02 & $\underline{\bm{39.39\pm 0.15}}$ & $37.21\pm 0.15$ & $30.56\pm 0.61$ & $30.08\pm 0.27$ & $19.00\pm 1.77$
 & \multirow{5}{*}{$\bm{79.77}$} \\
    	 &  & 0.1 & $\underline{\bm{40.18\pm 0.08}}$ & $37.74\pm 0.02$ & $35.76 \pm 0.04$ & $33.42 \pm 0.24$ & $27.72\pm 0.37$ &  \\
       &  & 0.2 & $\underline{\bm{40.66\pm 0.29}}$ & $37.88 \pm 0.58$ & $38.22 \pm 0.01$ & $35.70 \pm 0.25$ & $29.32\pm 0.35$ & \\  
    &  & 0.5 & $\underline{\bm{40.81\pm 0.08}}$ & $38.55\pm 0.14$ & $39.64\pm 0.14$ & $36.97\pm 0.06$ & $30.67 \pm 0.28$
 & \\
      &  & 1 & $\underline{\bm{40.77\pm 0.01}}$ & $38.07\pm 0.42$ & $-$ & $-$ & $31.32\pm 0.36$
 & \\
      \cmidrule(lr{0.5em}){2-9}
      & \multirow{2}{*}{5} 
      
      & 0.02 & $\underline{\bm{34.60\pm 0.06}}$ & $10.26\pm 0.37$ & $14.18\pm 5.92$ & $10.00\pm 0.00$ & $12.76\pm3.07$
 & \multirow{5}{*}{$\bm{53.97}$} \\
 
    	 &  & 0.1 & $\underline{\bm{39.91\pm 0.84}}$ & $10.90\pm 1.02$ & $35.08\pm 0.08$ & $10.40\pm 0.57$ & $24.50\pm 1.00$ &  \\
    
      &  & 0.2 & $\underline{\bm{40.15\pm 0.06}}$ & $11.58\pm 2.09$ & $37.50\pm 1.09$ & $11.52\pm 2.15$ & $27.70\pm 0.47$
 & \\ 
 &  & 0.5 & $\underline{\bm{39.90\pm  0.54 }}$& $31.61\pm  0.19$ & $38.67\pm 0.03$ & $20.51\pm 0.17$ & $29.50\pm 0.19$
& \\
      &  & 1 & $\underline{\bm{40.52\pm 0.07}}$ & $32.50\pm 0.84$ & $-$ & $-$ & $30.22\pm 0.81$ & \\
      \cmidrule(lr{0.5em}){2-9}
      & \multirow{2}{*}{10} 
      & 0.02 & $\underline{\bm{38.97\pm 0.03}}$ & $10.84\pm 1.18$ & $26.60\pm 0.89$ & $10.03\pm 0.05$ & $14.17\pm 2.87$
 & \multirow{5}{*}{$\bm{43.45}$} \\
    	 &  & 0.1 & $\underline{\bm{39.04\pm 0.02}}$ & $14.70\pm 5.24$ & $34.42\pm 0.10$ & $11.54\pm 2.18$ & $24.17\pm 3.25$  & \\
      &  & 0.2 &  $\underline{\bm{39.69\pm 0.05}}$ & $15.49\pm 2.23$ & $35.71\pm 0.73$ & $13.81\pm 2.74$ & $28.24\pm0.20$ &  \\
      &  & 0.5 & $\underline{\bm{39.57\pm 0.12}}$ & $28.52\pm 0.36$ & $38.53\pm 0.36$ & $20.80\pm 0.28$ & $29.78\pm 0.42$ & \\
      &  & 1 & $\underline{\bm{38.87\pm 0.23}}$ & $28.07\pm 0.92$ & $-$ & $-$ & $30.09\pm 0.47$ & \\
      \cmidrule(lr{0.5em}){2-9}
       & \multirow{2}{*}{20} & 0.02 & $\underline{\bm{17.03\pm 0.08}}$ & $11.1\pm 1.56$ & $16.39\pm 0.68$ & $10.01\pm 0.02$ & $10.25\pm 0.07$
 & \multirow{5}{*}{$\underline{\bm{20.72}}$} \\

    	 & & 0.1 & $\underline{\bm{20.46\pm 0.03}}$ & $11.54\pm 1.59$ & $18.87\pm 0.04$ & $11.46\pm 1.47$ & $15.28\pm 1.963$ & \\
      &  & 0.2 & $ \bm{22.06\pm 0.17}$ & $10.55\pm 0.65$ & $20.23\pm 0.02 $& $12.92\pm 1.03$ & $19.58\pm 1.10$ & \\

      &  & 0.5 & $\underline{\bm{23.30\pm 0.14}}$ & $14.35\pm 0.59$ & $21.11\pm 0.16$ & $16.99\pm 0.79$ & $\bm{24.77\pm 1.69}$ & \\
      &  & 1 & $\underline{\bm{25.35\pm 0.12}}$ & $23.21\pm 0.19$ & $-$ & $-$ & $\bm{25.40\pm 2.03}$ & \\
\bottomrule      
  \end{tabular}}
\end{table*}

We can observe in Table~\ref{tab:comparison Cifar-10-linear-full}, due to the fact that the complexity of logging policy as a deep model is more than a linear model, the linear CIFAR-10 model accuracy is worse than the logging policy. The reason behind this setting is that a simple linear model doesn't work well on the raw flattened image and pre-trained features inject unknown prior information into the input of the models. Therefore, the structure of the logging policy can affect the performance of the final learning policy. However, our algorithms, WCE-S2BL and KL-S2BL, outperform the baseline, B-CRM, Table \ref{tab:comparison linear}. 
\subsection{Real-World Experiments}\label{sec:real_world}
{\blue
We also carried out experiments on KuaiRec which is a dataset of human interactions with played videos in a mobile application. We adopt the setting introduced in
\citet{zhang2023uncertaintyaware} for our experiments. Our logging policy is a random sampler choosing between items available for each user with random probabilities with the constraint to achieve $70\%$ average cost. We assign random scores in $[0.001, 1.0]$ to each item that the user rated and normalize items with the same cost together and multiply the score of items with cost 1 by $0.7$ and other items by $0.3$ to get the $70\%$ average cost. We don't use explicit truncation for propensity scores in this dataset. For each user, we sample 5 items according to the logging policy to create the logged bandit dataset. \\
Because in KuaiRec, as a recommendation system dataset, each user (context) can have multiple preferred items (actions), the accuracy of the learned policy (proportion of correctly suggested items) can't give a complete evaluation of the model's performance. We use the empirical IPS, evaluated based on test dataset. \\
We train the models with batch-size 32 and an initial learning rate of 0.01 with a cosine annealing learning rate scheduler and use automatic hyper-parameter tuning for other hyper-parameters. We repeat each experiment 5 times and report the average and standard deviation of scores. Table \ref{tab:comparison kuairec} shows our results.
}

\begin{table}[H]

  \caption{Comparison of different algorithms WCE-S2BL, KL-S2BL, BanditNet empirical IPS for KuaiRec with different proportions of labeled data ($\rho\in\{0.02,0.1, 0.2,0.5,1\}$). }
  \label{tab:comparison kuairec}
  \centering
  \begin{center}
  \resizebox{0.7\linewidth}{!}{
	\begin{tabular}{ccccc}
	\\
 \toprule
    	Dataset & $\rho$ & \textbf{WCE-S2BL} & \textbf{KL-S2BL} & \textbf{BanditNet} \\
    	\midrule     
\multirow{5}{*}{KuaiRec} & 0.02 & $0.73\pm 0.33$ & \bm{$0.88\pm 0.28$} & $0.74\pm 0.43$ \\
& 0.1 & \bm{$0.73\pm 0.27$} & $0.69\pm 0.19$ & $0.58\pm 0.09$  \\
& 0.2 & \bm{$0.70\pm 0.13$} & $0.62\pm 0.19$ & $0.69\pm 0.34$ \\
& 0.5 & \bm{$0.76\pm 0.26$} & $0.72\pm0.20$ & $0.66\pm 0.17$\\
& 1.0 & \bm{$0.94\pm 0.20$} & $0.73\pm 0.12$ & $0.66\pm 0.23$ \\
      \bottomrule
  \end{tabular}}
  \end{center}
\end{table}
\clearpage
\subsection{Performance discussion  and analysis}\label{app: discussion}

Given that the same logged data were applied in both the linear and deep models, an apparent observation is the enhanced performance displayed by the deep model for all datasets, i.e., FMNIST, CIFAR-10, CIFAR-100 and EMNIST. The performance improvement also depends on the \emph{available portion of the logged known-feedback dataset}, denoted as $\rho$ and the \emph{quality of the initial logging policy}. In particular, both Tables~\ref{tab:comparison main-banditnet-full} and \ref{tab:comparison main-linear-full} demonstrate that when the logging policy is nearly uniform (i.e., Large $\tau$), superior performance is predominantly realized through WCE-S2BL and KL-S2BL. In addition, it is observed that in the majority of cases, when we have access to a relatively minor segment of the logged known-feedback dataset (e.g., $\rho=0.02$), the performance of WCE-S2BL is superior. This superior performance is particularly evident within the FMNIST and CIFAR-10 datasets for the deep model, where WCE-S2BL typically surpasses the performance of other proposed methods and B-CRM as the baseline. 

In the linear model (Tables \ref{tab:comparison main-linear-full}, \ref{tab:comparison main-linear-EM-Cifar-100}), the accuracy of WCE-S2BL remains high while the accuracy of the logging policy decreases and also keeps a significant gap with Bayesian-CRM model. In the deep setting, the same happens. Note that, in the linear model at $\tau=1$, wherein the performance of the logging policy exceeds $90\%$, there is an absence of algorithms demonstrating superior performance relative to the logging policy. It can be due to the complexity of feature space and the limitation of the linear model. The same phenomenon is also observed in CIFAR-10 for the linear model. In deep model setup, we observe that the WCE-S2BL for $\tau=1$ and FMNIST has better performance with respect to other proposed methods. We can also observe the performance improvement in KuaiRec dataset (Table~\ref{tab:comparison kuairec}) when using WCE-S2BL algorithm.

It is worthwhile to mention that for the logging policy close to uniform, our methods have better performance in both linear and deep models. 

Regarding the performance of WCE-S2BLK and KL-S2BLK with respect to WCE-S2BL and KL-S2BL, we can observe that in all cases, the logged missing-feedback dataset, can help us to achieve a better performance. This suggests that the inclusion of the logged missing-feedback dataset is beneficial for optimizing KL divergence (or reverse KL divergence), leading to a more accurate estimation and reduced variance of the IPS estimator. In particular, from Proposition~\ref{Prop: error analysis}, we expected that the error of estimators of KL divergence and reverse KL divergence would be reduced by using more data samples. Therefore, the logged missing-feedback dataset, can help to minimize the KL divergence and reverse KL divergence with a better estimation error.

From Table~\ref{tab:comparison main-banditnet-full} and Table~\ref{tab:comparison main-linear-full} we can also observe that as the number of samples decreases, WCE-S2BL keeps a more stable performance and its decrease in accuracy is much less than other methods and it's even negligible in many cases, such as $\tau=1$. This feature makes WCE-S2BL the best performer for $\rho=0.02$, when the proportion of the labeled data is smallest, for all settings and datasets. 

In experiments with deep models (as shown in Tables \ref{tab:comparison main-banditnet-full} and \ref{tab:comparison main-banditnet-full-emn-cifar-100}), WCE-S2BL achieves the highest performance in \textbf{47 out of 65} scenarios, with comparable performance in the remaining ones. For the linear model, this ratio is \textbf{70 out of 80} scenarios.





\subsection{Direct Approach: Q-learning approach}\label{App: direct approach}

Inspired by Pseudo-labeling approach in semi-supervised learning, we can propose Q-learning approach (cost-function estimation). In this approach, we first estimate the cost function using a logged known-feedback dataset. Using the model for cost function, we assign pseudo-feedback to the logged missing-feedback dataset. Then we train the final model via truncated IPS estimator using both logged known-feedback and Pseudo-feedback datasets.

For estimation of the cost function, we employed a logistic regression with a sigmoid activation function and a linear layer. Note that, in this scenario the feedback are binary. Second, we generate pseudo-feedback by applying the cost function estimator to the logged missing-feedback dataset. Finally, we train the truncated IPS estimator with both the logged known-feedback dataset and the pseudo-feedback dataset.

In Table~\ref{tab:comparison Q-learning}, we present the results (accuracy) of our algorithms (WCE-S2BL and KL-S2BL) and Q-learning under the EMNIST dataset with varying ratios of missing-feedback data to known-feedback data.

\begin{table}[H]
  \centering
  \caption{Comparison of different algorithms WCE-S2BL, KL-S2BL and Q-learning for EMNIST with linear model setup and different qualities of logging policy ($\tau\in \{1,5,10,20\}$) and proportions of labeled data ($\rho\in\{0.02,0.1, 0.2,0.5,1\}$). }
    \vspace{-5mm}
  \label{tab:comparison Q-learning}
  \resizebox{0.7\linewidth}{!}{
	\begin{tabular}{ccccccc}
	\\
 \toprule
    	Dataset & $\tau$ & $\rho$ & \textbf{WCE-S2BL} & \textbf{KL-S2BL} & \textbf{Q-Learning} & \textbf{Logging Policy}\\
    	\midrule     
\multirow{4}{*}{EMNIST} & \multirow{5}{*}{1} & 0.02 & \bm{$87.00\pm 0.01$} & $77.18\pm 0.37$ & $26.16\pm 1.30$ & \multirow{5}{*}{$76.55$} \\
 &  & 0.1 & \bm{$87.52\pm 0.00$} & $69.79\pm 0.56$ & $22.34\pm 0.48$ &  \\
 &  & 0.2 & \bm{$87.60\pm 0.01$} & $79.83\pm 0.50$ & $21.99\pm 0.93$ &  \\
 &  & 0.5 & \bm{$87.69\pm 0.04$} & $76.52\pm 0.42$ & $11.17\pm 0.25$ &  \\
 &  & 1.0 & \bm{$87.68\pm 0.02$} & $80.83\pm 0.73$ & $10.00\pm 0.00$ &  \\
    	\cmidrule(lr{0.5em}){2-7}     
 & \multirow{5}{*}{5} & 0.02 & \bm{$74.14\pm 0.02$} & $33.86\pm 0.38$ & $10.0\pm 0.00$ & \multirow{5}{*}{$41.06$} \\
 &  & 0.1 & \bm{$82.10\pm 2.21$} & $59.92\pm 0.57$ & $21.37\pm 4.35$ &  \\
 &  & 0.2 & \bm{$82.21\pm 2.60$} & $69.39\pm 0.37$ & $12.74\pm 3.87$ &  \\
 &  & 0.5 & $84.91\pm 2.87$ & \bm{$85.22\pm 0.13$} & $59.80\pm 5.12$ &  \\
 &  & 1.0 & $80.03\pm 2.03$ & \bm{$86.81\pm 0.05$} & $81.08\pm 7.16$ &  \\
     	\cmidrule(lr{0.5em}){2-7}     
 & \multirow{5}{*}{10} & 0.02 & \bm{$82.91\pm 0.01$} & $33.54\pm 1.24$ & $30.43\pm 4.50$ & \multirow{5}{*}{$31.86$} \\
 &  & 0.1 & \bm{$82.95\pm 0.03$} & $55.02\pm 0.79$ & $22.2\pm 8.80$ &  \\
 &  & 0.2 & $83.90\pm 3.19$ & \bm{$84.27\pm 0.07$} & $24.14\pm 10.54$ &  \\
 &  & 0.5 & \bm{$88.01\pm 0.15$} & $86.42\pm 0.04$ & $59.22\pm 0.59$ &  \\
 &  & 1.0 & \bm{$88.98\pm 0.35$} & $86.77\pm 0.01$ & $82.12\pm 3.56$ &  \\
     	\cmidrule(lr{0.5em}){2-7}    
 & \multirow{5}{*}{20} & 0.02 & \bm{$82.17\pm 0.04$} & $23.34\pm 0.40$ & $27.97\pm 2.03$ & \multirow{5}{*}{$23.83$} \\
 &  & 0.1 & \bm{$87.72\pm 0.14$} & $63.02\pm 2.19$ & $26.76\pm 0.18$ &  \\
 &  & 0.2 & \bm{$88.66\pm 0.06$} & $82.93\pm 0.25$ & $36.71\pm 4.00$ &  \\
 &  & 0.5 & \bm{$89.66\pm 0.09$} & $84.76\pm 0.14$ & $50.48\pm 3.67$ &  \\
 &  & 1.0 & \bm{$89.37\pm 0.17$} & $80.00\pm 0.10$ & $84.46\pm 3.17$ &  \\
 \bottomrule
  \end{tabular}}
\end{table}

As we can observe, the performance of Q-learning approach in EMNIST is worse than our algorithms, WCE-S2BL and KL-S2BL. Note that the Pseudo-feedback for logged missing-feedback samples can be different from true feedback (cost). Therefore, we have some noise in feedback and the (truncated) IPS estimator underperforms under noisy-feedback \cite{wang2017optimal}. This phenomena is also known as confirmation bias in semi-supervised learning scenario. It is interesting to explore other estimator which are robust to noise in feedback and can improve the Q-learning approach under both known-feedback and missing-feedback datasets.

\subsection{Code} 
We thank the authors of \citet{aouali2023exponential} for kindly sharing their code with us. The code is available at \url{https://gitlab.com/armin_gm/semi_logged_bandit_kl}.
All our experiments were run using 3 servers, each one with a GTX 3090 GPU
 and 32GB of RAM,
\end{document}